
\documentclass{article}

\usepackage{microtype}
\usepackage{graphicx}
\usepackage{subfig}
\usepackage{booktabs} 



\usepackage[accepted]{icml2020}
\usepackage{amsmath,amsfonts,amsthm}
\usepackage{pgf,tikz,pgfplots}
\usepackage{subfig}
\pgfplotsset{compat=1.14}
\usepackage{mathrsfs}
\usetikzlibrary{arrows}
\usepackage{color}


\icmltitlerunning{When are Non-parametric Methods Robust?}
\newtheorem{thm}{Theorem}
\newtheorem{lem}[thm]{Lemma}
\newtheorem{cor}[thm]{Corollary}

\newtheorem{defn}[thm]{Definition}

\def\D{{\mathcal D}}
\def\X{\mathcal X}
\def\R{\mathbb R}
\def\Y{\{\pm 1\}}

\def\P{\mathbb{P}}

\def\b{g^*}

\def\rcons{r-consistent}
\def\rconsy{r-consistency}
\def\ap{AdvPrun}
\def\ga{RobustNonPar}

\DeclareMathOperator*{\argmax}{arg\,max}

\begin{document}

\twocolumn[
\icmltitle{When are Non-Parametric Methods Robust?}



\icmlsetsymbol{equal}{*}

\begin{icmlauthorlist}
\icmlauthor{Robi Bhattacharjee}{equal,to}
\icmlauthor{Kamalika Chaudhuri}{equal,to}
\end{icmlauthorlist}

\icmlaffiliation{to}{Department of Computer Science, University of California, San Diego}
\icmlcorrespondingauthor{Robi}{rcbhatta@eng.ucsd.edu}

\icmlkeywords{Machine Learning, ICML}

\vskip 0.3in
]

\printAffiliationsAndNotice{\icmlEqualContribution} 

\begin{abstract}
A growing body of research has shown that many classifiers are susceptible to {\em{adversarial examples}} -- small strategic modifications to test inputs that lead to misclassification. In this work, we study general non-parametric methods, with a view towards understanding when they are robust to these modifications. We establish general conditions under which non-parametric methods are $r$-consistent -- in the sense that they converge to optimally robust and accurate classifiers in the large sample limit. 

Concretely, our results show that when data is well-separated, nearest neighbors and kernel classifiers are $r$-consistent, while histograms are not. For general data distributions, we prove that preprocessing by Adversarial Pruning~\cite{YRWC19} -- that makes data well-separated -- followed by nearest neighbors or kernel classifiers also leads to $r$-consistency. 
\end{abstract}

\section{Introduction}

Recent work has shown that many classifiers tend to be highly non-robust and that small strategic modifications to regular test inputs can cause them to misclassify~\cite{Szegedy14, Goodfellow14, MeekLowd05}. Motivated by the use of machine learning in safety-critical applications, this phenomenon has recently received considerable interest; however, what exactly causes this phenomenon -- known in the literature as {\em{adversarial examples}} -- still remains a mystery.

Prior work has looked at three plausible reasons why adversarial examples might exist. The first, of course, is the possibility that in real data distributions, different classes are very close together in space -- which does not seem plausible in practice. Another possibility is that classification algorithms may require more data to be robust than to be merely accurate; some prior work~\cite{Madry18, WJC18, Srebro19} suggests that this might be true for certain classifiers or algorithms. Finally, others~\cite{Bubeck19, Vinod19, WJC18} have suggested that better training algorithms may give rise to more robust classifiers -- and that in some cases, finding robust classifiers may even be computationally challenging.

In this work, we consider this problem in the context of general non-parametric classifiers. Contrary to parametrics, non-parametric methods are a form of local classifiers, and include a large number of pattern recognition methods such as nearest neighbors, decision trees, random forests and kernel classifiers. There is a richly developed statistical theory of non-parametric methods~\cite{devroye96}, which focuses on accuracy, and provides very general conditions under which these methods converge to the Bayes optimal with growing number of samples. We, in contrast, analyze robustness properties of these methods, and ask instead when they converge to the classifier with the highest astuteness at a desired radius $r$. Recall that the astuteness of a classifier at radius $r$ is the fraction of points from the distribution on which it is accurate and has the same prediction up to a distance $r$~\cite{WJC18, Madry18}.

 We begin by looking at the very simple case when data from different classes is well-separated -- by at least a distance $2r$. Although achieving astuteness in this case may appear trivial, we show that even in this highly favorable case, not all non-parametric methods provide robust classifiers -- and this even holds for methods that converge to the Bayes optimal in the large sample limit.  

This raises the natural question -- when do non-parametric methods produce astute classifiers? We next provide conditions under which a non-parametric method converges to the most astute classifier in the large sample limit under well-separated data. Our conditions are analogous to the classical conditions for convergence to the Bayes optimal~\cite{devroye96, Stone77}, but a little stronger. We show that nearest neighbors and kernel classifiers whose kernel functions decay fast enough, satisfy these conditions, and hence converge to astute classifiers in the large sample limit. In constrast, histogram classifiers, which do converge to the Bayes optimal in the large sample limit, may not converge to the most astute classifier. This indicates that there may be some non-parametric methods, such as nearest neighbors and kernel classifiers, that are more naturally robust when trained on well-separated data, and some that are not.

What happens when different classes in the data are not as well-separated? For this case, \cite{YRWC19} proposes a method called Adversarial Pruning that preprocesses the training data by retaining the maximal set of points such that different classes are distance $\geq 2r$ apart, and then trains a non-parametric method on the pruned data. We next prove that if a non-parametric method has certain properties, then the classifier produced by Adversarial Pruning followed by the method does converges to the most astute classifier in the large sample limit. We show that again nearest neighbors and kernel classifiers whose kernel functions decay faster than inverse polynomials satisfy these properties. Our results thus complement and build upon the empirical results of~\cite{YRWC19} by providing a performance guarantee. 

What can we conclude about the cause for adversarial examples? Our results seem to indicate that at least for non-parametrics, it is mostly the training algorithms that are responsible. With a few exceptions, decades of prior work in machine learning and pattern recognition has largely focussed on designing training methods that provide increasingly accurate classifiers -- perhaps to the detriment of other aspects such as robustness. In this context, our results serve to (a) provide a set of guidelines that can be used for designing non-parametric methods that are robust and accurate on well-separated data and (b) demonstrate that when data is not well-separated, preprocessing through adversarial pruning~\cite{YRWC19} may be used to ensure convergence to optimally astute solutions in the large sample limit.


\subsection{Related Work}

There is a large body of work on adversarial attacks~\cite{Carlini17, Liu17, Papernot17, Papernot16,Szegedy14} and defenses~\cite{Hein17,Katz17,Schmidt18,Wu16,Steinhardt18, Sinha18} in the parametric setting, specifically focusing on neural networks. On the other hand, adversarial examples for nonparametric classifiers have mostly been studied in a much more ad-hoc manner, and to our knowledge, there has been no theoretical investigation into general properties of algorithms that promote robustness in non-parametric classifiers.

For nearest neighbors, there has been some prior work on adversarial attacks~\cite{Amsaleg17, Sitawarin19, WJC18, YRWC19} as well as defenses. Wang et. al. \cite{WJC18} proposes a defense for 1-NN by pruning the input sample. However, their defense learns a classifier whose robustness regions converge towards those of the Bayes optimal classifier, which itself may potentially have poor robustness properties. Yang et. al. \cite{YRWC19} accounts for this problem by proposing the notion of the $r$-optimal classifier, and propose an algorithm called Adversarial Pruning which can be interpreted as a finite sample approximation to the $r$-optimal. However, they do not provide formal performance guarantees for Adversarial Pruning, which we do. 

For Kernel methods, Hein and Andriushchenko \cite{Hein17} study lower bounds on the norm of the adversarial manipulation that is required for changing a classifiers output. They specifically study bounds for Kernel Classifiers, and propose an empirically based regularization idea that improves robustness. In this work, we improve the robustness properties of kernel classification through adversarial pruning, and show formal guarantees regarding convergence towards the $r$-optimal classifier. 

For decision trees and random forests, attacks and defenses have been provided by \cite{Hein19, Kantchelian15, Hsiehicml19}. Again, most of the work here is empirical in nature, and convergence guarantees are not provided. 

Pruning has a long history of being applied for improving nearest neighbors \cite{Gates72, Gottlieb14, Hart68, KontorovichSW17, KontorovichW15, Hanneke19}, but this has been entirely done in the context of generalization, without accounting for robustness. In their work, Yang et. al. empirically show that adversarial pruning can improve robustness for nearest neighbor classifiers. However, they do not provide any formal guarantees for their algorithms. In this work, we prove formal guarantees for \textit{adversarial pruning} in the large sample limit, both for nearest neighbors as well as for more general \textit{weight functions.} 

There is a long history of literature for understanding the consistency of Kernel classifiers \cite{Steinwart05, Stone77}, but this has only been done for accuracy and generalization. In this work, we find different conditions are needed to ensure that a Kernel classifier converges in robustness in addition to accuracy.

\section{Preliminaries}

\subsection{Setting}
We consider binary classification where instances are drawn from a totally bounded metric space $\X$ that is equipped with distance metric denoted by $d$, and the label space is $\Y = \{ -1, +1 \}$. The classical goal of classification is to build a highly \textit{accurate} classifier, which we define as follows.

\begin{defn}
(Accuracy) Let $\D$ be a distribution over $\X \times \Y$, and let $f \in \Y^\X$ be a classifier. Then the \textbf{accuracy} of $f$ over $\D$, denoted $A(f, \D)$, is the fraction of examples $(x,y) \sim \D$ for which $f(x) = y$. Thus $$A(f, \D) = P_{(x,y) \sim \D}[f(x) = y].$$
\end{defn}

In this work, we consider \textit{robustness} in addition to accuracy. Let $B(x,r)$ denoted the closed ball of radius $r$ centered at $x$. 

\begin{defn}
(Robustness) A classifier $f \in \Y^\X$ is said to be \textbf{robust} at $x$ with radius $r$ if $f(x) = f(x')$ for all $x' \in B(x,r)$.
\end{defn}

Our goal is to find non-parametric algorithms that output classifiers that are robust, in addition to being accurate. To account for both criteria, we combine them into a notion of \textit{astuteness}~\cite{WJC18, Madry18}. 

\begin{defn}
(Astuteness) A classifier $f \in \Y^\X$ is said to be \textbf{astute} at $(x,y)$ with radius $r$ if $f$ is robust at $x$ with radius $r$ and $f(x) = y$. The \textbf{astuteness} of $f$ over $\D$, denoted $A_r(f, \D)$, is the fraction of examples $(x,y) \sim \D$ for which $f$ is astute at $(x,y)$ with radius $r$. Thus $$A_r(f, \D) = P_{(x, y) \sim \D}[f(x') = y, \forall x' \in B(x,r)].$$
\end{defn}

It is worth noting that $A_0(f, \D) = A(f, \D)$, since astuteness with radius $0$ is simply the accuracy. For this reason, we will use $A_0(f, \D)$ to denote accuracy from this point forwards.

\subsection{Notions of Consistency}

Traditionally, a classification algorithm is said to be consistent if as the sample size grows to infinity, the accuracy of the classifier it learns converges towards the best possible accuracy on the underlying data distribution. We next introduce and formalize an alternative form of consistency, called $r$-consistency, that applies to robust classifiers.

We begin with a formal definition of the Bayes Optimal Classifier -- the most accurate classifier on a distribution -- and consistency. 

\begin{defn}
(Bayes Optimal Classifier) The \textbf{Bayes Optimal Classifier} on a distribution $\D$, denoted by $\b$, is defined as follows. Let $\eta(x) = p_\D(+1|x)$. Then
 \[ \b(x) = \begin{cases} 
      +1 & \eta(x) \geq 0.5 \\
      -1 & \eta(x) < 0.5 \\
   \end{cases}
\]
It can be shown that $\b$ achieves the highest accuracy over $\D$ over all classifiers.
\end{defn}

\begin{defn}
(Consistency) Let $M$ be a classification algorithm  over $\X \times \Y$. $M$ is said to be \textbf{consistent} if for any $\D$ over $\X \times \Y$, and any $\epsilon, \delta$ over $(0,1)$, there exists $N$ such that for $n \geq N$, with probability $1-\delta$ over $S \sim \D^n$, we have: $$A(M(S), \D) \geq A(\b, \D) - \epsilon,$$ where $\b$ is the Bayes optimal classifier for $\D$. 
\end{defn}

How can we incorporate robustness in addition to accuracy in this notion? A plausible way, as used in~\cite{WJC18}, is that the classifier should converge towards being astute where the Bayes Optimal classifier is astute. However, the Bayes Optimal classifier is not necessarily the most astute classifier and may even have poor astuteness. To see this, consider the following example. 

\paragraph{Example 1}
Consider $\D$ over $\X = [0,1]$ such that $\D_\X$ is the uniform distribution and $$p(y=1|x) = \frac{1}{2} + \sin \frac{4 \pi x}{r}.$$ For any point $x$, there exists $x_1, x_2 \in ([x-r, x+r] \cap [0,1])$ such that $p(y=1|x_1) > \frac{1}{2}$ and $p(y=1|x_2) < \frac{1}{2}$. $A_r(\b, r) = 0$. However, the classifier that always predicts $f(x) = +1$ does better. It is robust everywhere, and since $P_{(x,y) \sim \D}[y = +1] = \frac{1}{2}$, it follows that $A_r(f, \D) = \frac{1}{2}$. \\ \\

This motivates the notion of the $r$-optimal classifier, introduced by~\cite{YRWC19}, which is the classifier with maximum astuteness. 

\begin{defn}
($r$-optimal classifier) The \textbf{$r$-optimal classifier} of a distribution $G$ denoted by $\b_r$ is the classifier with maximum astuteness. Thus $$\b_r = \argmax_{f \in \Y^\X} A_r(f, \D).$$ We let $A_r^*(\D)$ denote $A_r(\b_r, \D)$. 
\end{defn}

Observe that $\b_r$ is not necessarily unique. To account for this, we use $A_r^*(\D)$ in our definition for \rconsy. 

\begin{defn} \label{defn_archons}
(\rcons) Let $M$ be a classification algorithm over $\X \times \Y$. $M$ is said to be \textbf{\rcons} if for any $\D$,  any $\epsilon, \delta \in (0,1)$, and $0 < \gamma < r$, there exists $N$ such that for $n \geq N$, with probability $1-\delta$ over $S \sim \D^n$, $$A_{r-\gamma}(M(S), \D) \geq A_r^*(\D) - \epsilon.$$ if the above conditions hold for a specific distribution $\D$, we say that $M$ is \rcons\emph{ }with respect to $\D$. 
\end{defn}

Observe that in addition to the usual $\epsilon$ and $\delta$, there is an extra parameter $\gamma$ which measures the gap in the robustness radius. We may need this parameter as when classes are exactly $2r$ apart, we may not be able to find the exact robust boundary with only finite samples. 

Our analysis will be centered around understanding what kinds of algorithms $M$ provide highly astute classifiers for a given radius $r$. We begin by first considering the special case of \textit{$r$-separated} distributions. 

\begin{defn}
($r$-separated distributions) A distribution $\D$ is said to be \textbf{$r$-separated} if there exist subsets $T^+, T^- \subset \X$ such that 
\begin{enumerate}
	\item $\P_{(x,y) \sim \D}[x \in T^y] = 1$. 
	\item $\forall x_1 \in T^+, \forall x_2 \in T^-$, $d(x_1, x_2) > 2r$.
\end{enumerate}
\end{defn}

Observe that if $\D$ is $r$-separated, $A_r(\b_r, \D) = 1$.

\subsection{Non-parametric Classifiers}\label{classifiers}

Many non-parametric algorithms classify points by averaging labels over a local neighborhood from their training data. A very general form of this idea is encapsulated in \textit{weight functions} -- which is the general form we will use.

\begin{defn} \label{def:weight}
\cite{devroye96} A \textbf{weight function} $W$ is a non-parametric classifier with the following properties.
\begin{enumerate}
	\item Given input $S = \{(x_1, y_1), (x_2, y_2,), \dots, (x_n, y_n)\} \sim \D^n$, $W$ constructs functions $w_1^S, w_2^S, \dots, w_n^S: \X \to [0, 1]$ such that for all $x \in \X$, $\sum_1^n w_i^S(x) = 1$. The functions $w_i^S$ are allowed to depend on $x_1, x_2, \dots x_n$ but must be independent of $y_1, y_2, \dots, y_n$. 
	\item $W$ has output $W_S$ defined as \[ W_S(x) = \begin{cases} 
      +1 & \sum_1^n w_i^S(x)y_i > 0 \\
      -1 & \sum_1^n w_i^S(x)y_i \leq 0 \\
   \end{cases}
\]
As a result, $w_i^S(x)$ can be thought of as the weight that $(x_i, y_i)$ has in classifying $x$.
\end{enumerate}
\end{defn}

Weight functions encompass a fairly extensive set of common non-parametric classifiers, which is the motivation for considering them. We now define several common non-parametric algorithms that can be construed as weight functions. 

\begin{defn}
A \textbf{histogram classifier}, $H$, is a non-parametric classification algorithm over $\R^d \times \Y$ that works as follows. For a distribution $\D$ over $\R \times \Y$, $H$ takes $S = \{(x_i, y_i): 1 \leq i \leq n\} \sim \D^n$ as input. Let $k_i$ be a sequence with $\lim_{i \to \infty} k_i = \infty$ and $\lim_{i \to \infty} \frac{k_i}{i} = 0$. $H$ constructs a set of hypercubes $C = \{c_1, c_2, \dots, c_m\}$ as follows:
\begin{enumerate}
	\item Initially $C = \{c\}$, where $S \subset c$.
	\item For $c \in C$, if $c$ contains more than $k_n$ points of $S$, then partition $c$ into $2^d$ equally sized hypercubes, and insert them into $C$.
	\item Repeat step $2$ until all cubes in $C$ have at most $k_n$ points. 
\end{enumerate}
For $x \in \R$ let $c(x)$ denote the unique cell in $C$ containing $x$. If $c(x)$ doesn't exist, then $H_S(x) = -1$ by default. Otherwise, \[ H_S(x) = \begin{cases} 
      +1 & \sum_{x_i \in c(x)} y_i > 0 \\
      -1 & \sum_{x_i \in c(x)}y_i \leq 0 \\
   \end{cases}.
\]
\end{defn}

Histogram classifiers are weight functions in which all $x_i$ contained within the same cell as $x$ are given the same weight $w_i^S(x)$ in predicting $x$, while all other $x_i$ are given weight $0$. 

\begin{defn}
A \textbf{kernel classifier} is a weight function $W$ over $\X \times \Y$ constructed from function $K: \R^+ \cup \{0\} \to \R^+$ and some sequence $\{h_n\} \subset \R^+$ in the following manner. Given $S = \{(x_i, y_i)\} \sim \D^n$, we have $$w_i^S(x) = \frac{K(\frac{d(x, x_i)}{h_n})}{\sum_{j = 1}^n K(\frac{d(x, x_j)}{h_n})}.$$ Then, as above, $W$ has output \[ W_S(x) = \begin{cases} 
      +1 & \sum_1^n w_i^S(x)y_i > 0 \\
      -1 & \sum_1^n w_i^S(x)y_i \leq 0 \\
   \end{cases}
\]
\end{defn}

Finally, we note that $k_n$-nearest neighbors is also a weight function; $w_i^S(x) = \frac{1}{k_n}$ if $x_i$ is one of the $k_n$ closest neighbors of $x$ and $0$ otherwise. 

\section{Warm Up: $r$-separated distributions}

We begin by considering the case when the data distribution is $r$-separated; the more general case is considered in Section~\ref{sec:general}. While classifying $r$-separated distributions robustly may appear almost trivial, learning an arbitrary classifier does not necessarily produce an astute result. To see this, consider the following example of a histogram classifier -- which is known to be consistent.

We let $H$ denote the histogram classifier over $\R$.

\paragraph{Example 2}
Consider the data distribution $\D = \D^+ \cup \D^-$ where $D^+$ is the uniform distribution over $[0, \frac{1}{4})$ and $D^-$ is the uniform distribution over $(\frac{1}{2}, 1]$, $p(+1|x) = 1$ for $x \in \D^+$, and $p(-1|x) = 1$ for $x \in \D^-$. 

We make the following observations (refer to Figure \ref{fig:histogram}).
\begin{enumerate}
	\item $\D$ is $0.1$-separated, since the supports of $\D^+$ and $\D^-$ have distance $0.25 > 0.2$. 
	\item If $n$ is sufficiently large, $H$ will construct the cell $[0.25, 0.5)$, which will not be split because it will never contain any points. 
	\item $H_S(x) = -1$ for $x \in [0.25, 0.5)$.
	\item $H_S$ is not astute at $(x,1)$ for $x \in (0.15, 0.25)$. Thus $A_{0.1}(H_S, \D) = 0.8$.
\end{enumerate}

\begin{figure}
\centering
\definecolor{dtsfsf}{rgb}{0.8274509803921568,0.1843137254901961,0.1843137254901961}
\definecolor{sexdts}{rgb}{0.1803921568627451,0.49019607843137253,0.19607843137254902}
\begin{tikzpicture}[line cap=round,line join=round,>=triangle 45,x=1cm,y=1cm]
\clip(-4.68,0) rectangle (2.46,3);
\draw [line width=2pt] (-3.8971542001619626,1.7601524817792173)-- (-3.90319974718047,1.0800284421971598);
\draw [line width=2pt] (-2.7721542001619626,1.7501524817792171)-- (-2.7781997471804694,1.0700284421971598);
\draw [line width=2pt] (-1.6471542001619623,1.7401524817792173)-- (-1.6531997471804694,1.0600284421971597);
\draw [line width=2pt] (0.6028457998380374,1.720152481779217)-- (0.5968002528195302,1.0400284421971597);
\draw [line width=2pt] (-2.775,1.43)-- (-1.65,1.42);
\draw [line width=2pt,color=sexdts] (-1.65,1.42)-- (0.6,1.4);
\draw [line width=2pt,color=sexdts] (-3.9,1.44)-- (-3.2002330680045032,1.4337798494933733);
\draw [line width=2pt,color=dtsfsf] (-3.2002330680045032,1.4337798494933733)-- (-2.775,1.43);
\draw (-3.54,2.48) node[anchor=north west] {$\mathcal{D}^+$};
\draw (-0.76,2.46) node[anchor=north west] {$\mathcal{D}^-$};
\draw (-4,1.1) node[anchor=north west] {0};
\draw (-3.14,1.08) node[anchor=north west] {0.25};
\draw (-1.86,1.1) node[anchor=north west] {0.5};
\draw (0.5,1.08) node[anchor=north west] {1};
\end{tikzpicture}
\caption{$H_S$ is astute in the green region, but not robust in the red region.} \label{fig:histogram}
\end{figure}
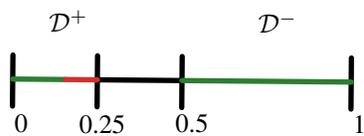

Example 2 shows that histogram classifiers do not always learn astute classifiers even when run on $r$-separated distributions. This motivates the question: which non-parametric classifiers do?

We answer this question in the following theorem, which gives sufficient conditions for a weight function (definition \ref{def:weight}) to be $r$-consistent over an $r$-separated distribution.

\begin{thm}\label{thm_stone_cons}
Let $\D$ be a distribution over $\X \times \Y$, and let $W$ be a weight function. Let $X$ be a random variable with distribution $\D_\X$, and $S = \{(x_1, y_1), (x_2, y_2), \dots, (x_n, y_n)\} \sim \D^n$. Suppose that for any $0 < a < b,$ $$\lim_{n \to \infty} \mathbb{E}_{X, S} \big [ \sup_{x' \in B(X, a)} \sum_1^n w_i^S(x')I_{||x_i - x'|| > b} \big] = 0.$$  Then if $\D$ is $r$-separated, $W$ is \rcons\emph{ } with respect to $\D$.  
\end{thm}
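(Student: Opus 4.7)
The plan is to exploit the $r$-separation together with the hypothesis to show that, with high probability over $S$, almost every test point admits a $(r-\gamma)$-ball on which $W_S$ is forced to predict the correct label. Fix $\epsilon, \delta \in (0,1)$ and $\gamma \in (0, r)$, set $a = r - \gamma$ and $b = r$ (so $0 < a < b$ and $a + b = 2r - \gamma < 2r$), and draw $(X, Y) \sim \D$ independently of $S \sim \D^n$. Since $\D$ is $r$-separated, almost surely $X \in T^Y$ and each $x_i \in T^{y_i}$, and any two points from opposite $T^\pm$ are strictly farther than $2r$ apart.

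For any $x' \in B(X, a)$ and any index $i$ with $||x_i - x'|| \leq b$, the triangle inequality gives $||x_i - X|| \leq a + b < 2r$, so $x_i$ must lie in the same $T^\pm$ as $X$, forcing $y_i = Y$. Writing $F_S(x') := \sum_{i : ||x_i - x'|| > b} w_i^S(x')$ and using $\sum_i w_i^S(x') = 1$, we get
\[
Y \sum_{i=1}^n w_i^S(x') y_i \;\geq\; (1 - F_S(x')) - F_S(x') \;=\; 1 - 2 F_S(x').
\]
Thus whenever $\sup_{x' \in B(X, a)} F_S(x') < 1/2$, we have $Y \sum_i w_i^S(x') y_i > 0$ for every $x' \in B(X, a)$. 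A short case split using the sign convention for $W_S$ (for $Y = +1$ this is immediate; for $Y = -1$ we have $\sum_i w_i^S(x') y_i < 0 \leq 0$) shows $W_S(x') = Y$ on all of $B(X, a)$, i.e., $W_S$ is astute at $(X, Y)$ with radius $r - \gamma$.

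It remains to convert the hypothesis into a high-probability statement. Let $\phi_n(X, S) := \sup_{x' \in B(X, a)} F_S(x')$; the hypothesis applied with our choice of $a, b$ gives $\mathbb{E}_{X, S}[\phi_n] \to 0$, so for large enough $n$ we have $\mathbb{E}_{X,S}[\phi_n] \leq \epsilon \delta / 2$. Markov's inequality yields $\mathbb{P}_{X,S}[\phi_n \geq 1/2] \leq \epsilon \delta$, and a second Markov inequality applied to $g(S) := \mathbb{P}_X[\phi_n(X, S) \geq 1/2 \mid S]$ (which has mean $\leq \epsilon\delta$ by Fubini) gives $\mathbb{P}_S[g(S) \geq \epsilon] \leq \delta$. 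Hence with probability at least $1 - \delta$ over $S$, at least a $1 - \epsilon$ fraction of $X$ satisfies $\phi_n(X, S) < 1/2$, and on this set $W_S$ is astute at $(X, Y)$ with radius $r - \gamma$. Since $A_r^*(\D) = 1$ for $r$-separated $\D$, this gives $A_{r-\gamma}(W_S, \D) \geq 1 - \epsilon = A_r^*(\D) - \epsilon$.

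The main subtlety is the geometric step: we need the total weight placed on "far" training indices ($||x_i - x'|| > b$) to drop below $1/2$ uniformly over $x' \in B(X, a)$, not just pointwise in $x'$, and it is precisely the uniformity enforced by the $\sup_{x'}$ inside the hypothesis that makes this possible. The expectation-to-probability conversion through two applications of Markov's inequality is then routine, as is the sign-convention case analysis for $W_S$.
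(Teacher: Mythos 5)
Your proof is correct, but it takes a genuinely different route from the paper's. The paper first constructs an $(\epsilon,\gamma,\alpha)$-decomposition of the distribution into finitely many balls each of mass at least $\alpha$ (Lemma~\ref{lem_balls}, which is where the total boundedness of $\mathcal{X}$ is used), then proves in Lemma~\ref{lem_expectation} that with high probability over $S$ the far-weight supremum is small \emph{uniformly} over an $(r-3\gamma)$-neighborhood of the entire decomposition --- the positive mass $\alpha$ of each ball is exactly what lets the expectation hypothesis be converted into this uniform statement --- and finally argues ball by ball that $W_S$ is astute with radius $r-3\gamma$ on each ball. You instead work pointwise in the test point $X$: the triangle-inequality step with $a=r-\gamma$, $b=r$ forces every non-far training point to carry the label $Y$, the computation $Y\sum_i w_i^S(x')y_i \geq 1-2F_S(x')$ handles both signs, and the Markov--Fubini--Markov chain converts the expectation hypothesis directly into the ``with probability $1-\delta$ over $S$, a $1-\epsilon$ fraction of $X$'' form required by the definition of $r$-consistency. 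Your route is shorter and more elementary, avoids the compactness and decomposition machinery entirely (so this particular theorem does not in fact need $\mathcal{X}$ to be totally bounded), and loses only $\gamma$ rather than $3\gamma$ in the radius. What the paper's heavier setup buys is reuse: the $(\epsilon,\gamma,\alpha)$-decomposition and the region-uniform version of the bound are the workhorses of the general-distribution results (Lemma~\ref{gen_thm}, Theorem~\ref{thm_weight_general} and its corollaries), so establishing them here amortizes the effort across the rest of the paper.
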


First, we compare Theorem \ref{thm_stone_cons} to Stone's theorem \cite{Stone77}, which gives sufficient conditions for a weight function to be consistent (i.e. converge in accuracy towards the Bayes optimal). For convenience, we include a statement of Stone's theorem. 
\begin{thm}\label{thm_stone}
\cite{Stone77} Let $W$ be weight function over $\X \times \Y$. Suppose the following conditions hold for any distribution $\D$ over $\X \times \Y$.  Let $X$ be a random variable with distribution $\D_\X$, and $S = \{(x_1, y_1), (x_2, y_2), \dots, (x_n, y_n)\} \sim \D^n$. All expectations are taken over $X$ and $S$. 
\begin{enumerate}
	\item There is a constant $c$ such that, for every nonnegative measurable function $f$ satisfying $\mathbb{E} [f(X)] < \infty$, $$\mathbb{E} [\sum_1^n w_i^S(X)f(x_i)] \leq c \mathbb{E} [f(x)].$$
	\item For all $a > 0$, $$\lim_{n \to \infty} \mathbb{E}[\sum_1^n w_i^S(x)I_{||x_i - X|| > a}] = 0,$$ where $I_{||x_i - X|| > a}$ is an indicator variable. 
	\item $$\lim_{n \to \infty} \mathbb{E}[\max_{1 \leq i \leq n} w_i^S(X)] = 0.$$
\end{enumerate}
Then $W$ is consistent. 
\end{thm}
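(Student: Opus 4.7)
The plan is to follow the classical strategy of reducing classification consistency to $L^2$-consistency of the weighted regression estimator, and then bounding that $L^2$-error through a bias/variance decomposition controlled by the three hypotheses. For a sample $S$, let $\hat{m}_n(x) = \sum_{i=1}^n w_i^S(x)\, y_i$, viewed as an estimator of $m(x) = 2\eta(x) - 1 = \mathbb{E}[Y \mid X = x]$. Then $W_S(x) = \mathrm{sign}\,\hat{m}_n(x)$ and $\b(x) = \mathrm{sign}\,m(x)$, so a direct calculation using $y \in \{\pm 1\}$ gives
\[
  A(\b, \D) - \mathbb{E}_S\, A(W_S, \D) \;\leq\; \mathbb{E}_{X,S}\bigl[|\hat{m}_n(X) - m(X)|\bigr] \;\leq\; \sqrt{J_n},
\]
where $J_n := \mathbb{E}_{X,S}[(\hat{m}_n(X) - m(X))^2]$. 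It therefore suffices to prove $J_n \to 0$, after which Markov's inequality upgrades the in-expectation statement to the high-probability statement required by the definition of consistency.

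I split $J_n$ around the ``noise-free'' version $\tilde{m}_n(x) := \sum_i w_i^S(x)\, m(x_i)$:
\[
  J_n \;\leq\; 2\,\mathbb{E}\bigl[(\hat{m}_n(X) - \tilde{m}_n(X))^2\bigr] \,+\, 2\,\mathbb{E}\bigl[(\tilde{m}_n(X) - m(X))^2\bigr].
\]
The variance term comes first. Conditioning on $X$ and on the covariates $x_1,\dots,x_n$, the $y_i$'s are independent with mean $m(x_i)$ and variance $\leq 1$, so
\[
  \mathbb{E}\bigl[(\hat{m}_n(X) - \tilde{m}_n(X))^2\bigr] \;\leq\; \mathbb{E}\Bigl[\sum_i w_i^S(X)^2\Bigr] \;\leq\; \mathbb{E}\bigl[\max_i w_i^S(X)\bigr],
\]
which tends to $0$ by condition~3 (using $\sum_i w_i^S(X) = 1$ to bound $\sum_i (w_i^S)^2 \leq \max_i w_i^S$).

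For the bias term, since both $m$ and $\tilde{m}_n$ take values in $[-1,1]$, it is enough to bound the $L^1$-bias $\mathbb{E}|\tilde{m}_n(X) - m(X)|$ (then use $(a-b)^2 \leq 2|a-b|$ when $|a|,|b| \leq 1$). The weights sum to $1$, so $|\tilde{m}_n(X) - m(X)| \leq \sum_i w_i^S(X)\,|m(x_i) - m(X)|$. I approximate $m$ in $L^1(\D_\X)$ by a bounded uniformly continuous $\tilde{m}$ with $\mathbb{E}|m(X) - \tilde{m}(X)| < \varepsilon$; this is possible because $\X$ is totally bounded, so Lipschitz functions are dense in $L^1(\D_\X)$. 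The triangle inequality then splits the target into (a) $\mathbb{E}|\tilde{m}(X) - m(X)| \leq \varepsilon$; (b) $\mathbb{E}[\sum_i w_i^S(X)\, |m(x_i) - \tilde{m}(x_i)|] \leq c\,\varepsilon$ via condition~1 applied to $f = |m - \tilde m|$; and (c) $\mathbb{E}[\sum_i w_i^S(X)\, |\tilde{m}(x_i) - \tilde{m}(X)|]$. For (c), choose $\delta > 0$ so that $|\tilde{m}(u) - \tilde{m}(v)| \leq \varepsilon$ whenever $d(u,v) \leq \delta$, and split the inner sum into indices with $\|x_i - X\| \leq \delta$ (contributing at most $\varepsilon$) and $\|x_i - X\| > \delta$ (contributing at most $2\|\tilde{m}\|_\infty\, \mathbb{E}[\sum_i w_i^S(X)\, I_{\|x_i - X\| > \delta}]$, which vanishes by condition~2). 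Sending $n \to \infty$ and then $\varepsilon \to 0$ yields $\mathbb{E}|\tilde{m}_n(X) - m(X)| \to 0$, and hence the bias term in $J_n$ vanishes.

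The main obstacle is the bias. Condition~1 is precisely what is needed to substitute $\tilde{m}$ for $m$ under the weighted sum: because the weights $w_i^S$ depend on the whole covariate sample, the $x_i$'s are not i.i.d.\ from $\D_\X$ after conditioning on the weights, so the naive change-of-measure step is illegal and must be replaced by the domination inequality of condition~1. Condition~2 is the locality assumption that controls what the weights do outside a shrinking neighbourhood of $X$ (with continuous $\tilde{m}$ replacing $m$), and condition~3 kills the variance. Combining the three, both summands in the decomposition of $J_n$ vanish, so $J_n \to 0$ and $W$ is consistent.
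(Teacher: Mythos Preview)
The paper does not actually prove this theorem; it is quoted as a classical result of Stone (1977) and included only to contrast its hypotheses with those of Theorem~\ref{thm_stone_cons}. There is therefore no ``paper's own proof'' to compare against.

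Your argument is the standard proof of Stone's theorem (essentially the one in Devroye--Gy\"orfi--Lugosi): pass from excess classification risk to the $L^1$ (hence $L^2$) error of the regression estimate $\hat m_n$ of $m=2\eta-1$, split $\hat m_n - m$ through the noise-free proxy $\tilde m_n = \sum_i w_i^S\, m(x_i)$, control the variance term by $\mathbb{E}[\max_i w_i^S(X)]$ via condition~3, and control the bias term by approximating $m$ in $L^1(\D_\X)$ by a bounded uniformly continuous $\tilde m$, using condition~1 to transfer the approximation error to the weighted sum and condition~2 to localize. The final Markov step correctly converts convergence of expected excess risk into the high-probability statement in the paper's definition of consistency. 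The argument is correct; the only cosmetic point is that density of uniformly continuous (or Lipschitz) functions in $L^1(\D_\X)$ follows already from standard measure-theoretic approximation and does not really need total boundedness of~$\X$.
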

There are two main differences between Theorem \ref{thm_stone_cons} and Stone's theorem.
 \begin{enumerate}
 	\item Conditions 1. and 3. of Stone's theorem are no longer necessary. This is because $r$-separated distributions are well-separated and thus have simpler conditions for consistency. In fact, a slight modification of the arguments of~\cite{Stone77} shows that for $r$-separated distributions, condition 2. alone is sufficient for consistency.
 	\item Condition 2. is strengthened. Instead of requiring the weight of $x_i$'s outside of a given radius to go to $0$ for $X \sim \D$, we require the same to \textit{uniformly} hold over a ball centered at $X$. 
\end{enumerate}

Theorem \ref{thm_stone_cons} provides a general condition that allows us to verify the $r$-consistency of non-parametric methods. We now show below that two common non-parametric algorithms -- $k_n$-nearest neighbors and kernel classifiers with rapidly decaying kernel functions -- satisfy the conditions of Theorem~\ref{thm_stone_cons}.
 
\begin{cor}\label{nn_sep_thm}
Let $\D$ be any $r$-separated distribution. Let $k_n$ be any sequence such that $\lim_{n \to \infty} \frac{k_n}{n} = 0$, and let $M$ be the $k_n$-nearest neighbors classifier on a sample $S \sim \D^n$. Then $M$ is \rcons\emph{ }with respect to $\D$. 
\end{cor}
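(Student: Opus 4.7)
The plan is to reduce the claim to an application of Theorem \ref{thm_stone_cons}: since $\D$ is $r$-separated and $M$ is a weight function with $w_i^S(x')=1/k_n$ when $x_i$ is among the $k_n$ nearest neighbors of $x'$ and $0$ otherwise, it suffices to verify the uniform-tail hypothesis, namely that for all $0<a<b$,
\[
\lim_{n\to\infty}\mathbb{E}_{X,S}\Bigl[\sup_{x'\in B(X,a)}\sum_{i=1}^{n} w_i^S(x')\,I_{\|x_i-x'\|>b}\Bigr]=0.
\]

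The key geometric observation is the following triangle-inequality inclusion: if $x'\in B(X,a)$ then $B(X,b-a)\subseteq B(x',b)$, because $\|y-x'\|\le\|y-X\|+\|X-x'\|\le(b-a)+a=b$. Consequently, if the sample $S$ contains at least $k_n$ points inside $B(X,b-a)$, then for \emph{every} $x'\in B(X,a)$ the $k_n$-th nearest neighbor of $x'$ is at distance $\le b$, so all $k_n$ nearest-neighbor weights are placed on points in $B(x',b)$, making the indicator sum equal to $0$. On the complementary event the sum is at most $1$ (weights sum to $1$). Letting $N_X=|\{i:x_i\in B(X,b-a)\}|$, this gives the pointwise bound
\[
\mathbb{E}_{X,S}\Bigl[\sup_{x'\in B(X,a)}\sum_i w_i^S(x')I_{\|x_i-x'\|>b}\Bigr]\le\mathbb{E}_X\bigl[\P_S[N_X<k_n]\bigr].
\]

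The remaining step is to show $\P_S[N_X<k_n]\to 0$ almost surely in $X$. Let $p_X=\P_{Y\sim\D_\X}[Y\in B(X,b-a)]$. For almost every $X$ with respect to $\D_\X$, the point $X$ lies in the support of $\D_\X$, so every open ball around $X$ has positive $\D_\X$-measure, giving $p_X>0$. Conditioning on $X$, $N_X$ is $\mathrm{Binomial}(n,p_X)$ with mean $np_X\to\infty$, while by hypothesis $k_n/n\to 0$, so eventually $k_n<np_X/2$. A standard Chernoff tail bound then yields $\P_S[N_X<k_n]\le\exp(-np_X/8)\to 0$. Since the integrand is bounded by $1$, the dominated convergence theorem yields $\mathbb{E}_X[\P_S[N_X<k_n]]\to 0$, completing the verification.

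The conceptually delicate step is the uniformity in $x'\in B(X,a)$, which is exactly why Theorem \ref{thm_stone_cons} strengthens Stone's condition; the triangle-inequality inclusion $B(X,b-a)\subseteq B(x',b)$ handles it cleanly for $k_n$-NN by converting a sup over $x'$ into a single event about the count $N_X$. The only subtlety to be careful about is ensuring $p_X>0$ a.s., which relies on $X$ lying in the support of $\D_\X$ almost surely in the totally bounded metric space $\X$; everything else is a routine Chernoff/dominated-convergence argument.
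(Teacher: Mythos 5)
Your proof is correct, and while it shares the paper's core geometric insight, the probabilistic execution is genuinely different. Both arguments reduce to Theorem \ref{thm_stone_cons} and both rest on the same key observation: if the sample places at least $k_n$ points sufficiently close to $X$, then for \emph{every} $x'\in B(X,a)$ the $k_n$ nearest neighbors of $x'$ lie within distance $b$, so the sup vanishes. The paper realizes "sufficiently close" via an $(\epsilon,\gamma,\alpha)$-decomposition (Lemma \ref{lem_balls}): it covers most of the mass by finitely many balls of radius $\gamma$ with $a+2\gamma<b$, each of measure at least a \emph{uniform} $\alpha>0$, and then applies a Chernoff bound plus a union bound over the finitely many balls. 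You instead work directly with the ball $B(X,b-a)$, whose measure $p_X$ varies with $X$; you handle the resulting non-uniformity by noting $p_X>0$ for $\D_\X$-almost every $X$ (using separability of the totally bounded space $\X$ so that a.e.\ $X$ lies in the support) and invoking dominated convergence. Your route is more elementary in that it bypasses the decomposition lemma and the union bound entirely; the paper's route buys a rate that is uniform over the covered region (useful elsewhere in their proofs, e.g.\ Lemma \ref{lem_expectation}), whereas yours gives only asymptotic convergence via DCT --- which is all the corollary requires. One small point worth making explicit in your write-up is the tie-breaking convention for $k_n$-NN: since at least $k_n$ sample points lie in $B(x',b)$, the $k_n$-th nearest-neighbor distance is at most $b$, so under any reasonable tie-breaking all selected neighbors satisfy $\|x_i-x'\|\le b$ and the indicator sum is indeed $0$.
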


\paragraph{\textbf{Remarks:}}
\begin{enumerate}
	\item Because the data distribution is $r$-separated, $k_n = 1$ will be $r$-consistent. Also observe that for $r$-separated distributions, $k_n = 1$ will converge towards the Bayes Optimal classifier.
	\item In general, $M$ converges towards the Bayes Optimal classifier provided that $k_n \to \infty$ in addition to $k_n /n \to 0$. This condition is not necessary for \rconsy -- because the distribution is $r$-separated. 
\end{enumerate}

We next show that kernel classifiers are also $r$-consistent on $r$-separated data distributions, provided the kernel function decreases rapidly enough. 

\begin{cor}\label{thm_kernel}
Let $W$ be a kernel classifier over $\X \times \Y$ constructed from $K$ and $h_n$. Suppose the following properties hold for $K$ and $h_n$.
\begin{enumerate}
	\item For any $c > 1$, $\lim_{x \to \infty} \frac{K(cx)}{K(x)} = 0.$
	\item $\lim_{n \to \infty} h_n = 0.$
\end{enumerate}
If $\D$ is an $r$-separated distribution over $\X \times \Y$, then $W$ is \rcons\emph{ }with respect to $\D$. 
\end{cor}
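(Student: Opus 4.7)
The plan is to verify the uniform weight-decay hypothesis of Theorem~\ref{thm_stone_cons}, namely
\[
\lim_{n \to \infty} \mathbb{E}_{X,S}\Bigl[\sup_{x' \in B(X,a)} \sum_{i=1}^n w_i^S(x') I_{\|x_i-x'\| > b}\Bigr] = 0
\]
for any $0 < a < b$, after which \rconsy{} follows from that theorem. Fix such $a,b$ and pick $\epsilon > 0$ small enough that $a+\epsilon < b$, so that $c := b/(a+\epsilon) > 1$. The key analytic input is that since $h_n \to 0$ and $K(cx)/K(x) \to 0$ as $x \to \infty$, the ratio
\[
\rho_n := \frac{K(b/h_n)}{K((a+\epsilon)/h_n)} = \frac{K\bigl(c \cdot (a+\epsilon)/h_n\bigr)}{K\bigl((a+\epsilon)/h_n\bigr)}
\]
tends to $0$ as $n \to \infty$.

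Next I would bound the sum pointwise. Assuming the kernel $K$ is non-increasing (as is standard for kernel classifiers), every $x_i$ with $\|x_i - x'\| > b$ contributes at most $K(b/h_n)$ to the numerator of $w_i^S(x')$, giving a numerator $\le n K(b/h_n)$. For the denominator, let $N(X) = |\{\,j : x_j \in B(X,\epsilon)\,\}|$; for any $x' \in B(X,a)$ each such $x_j$ satisfies $d(x',x_j) \le a+\epsilon$ and hence contributes at least $K((a+\epsilon)/h_n)$. Therefore on the event $\{N(X) \ge 1\}$,
\[
\sup_{x' \in B(X,a)} \sum_{i:\|x_i - x'\| > b} w_i^S(x') \;\le\; \frac{n}{N(X)} \cdot \rho_n.
\]

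The remaining step is to control $n/N(X)$. Let $p(X) := \D_\X(B(X,\epsilon))$; since $\D_\X$ has positive mass in every open neighborhood of each point of its support, $p(X) > 0$ almost surely. Given any $\delta > 0$, choose $\tau > 0$ with $\P[p(X) \ge \tau] \ge 1-\delta$. Conditional on $\{p(X) \ge \tau\}$, $N(X)$ stochastically dominates a $\mathrm{Binomial}(n,\tau)$ variable, so a Chernoff bound yields $N(X) \ge n\tau/2$ with probability at least $1 - e^{-n\tau/8}$. On this good event the supremum is at most $(2/\tau)\rho_n$; on its complement I use the trivial bound $\le 1$ (the $w_i^S(x')$ sum to $1$). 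Hence the expectation is at most $(2/\tau)\rho_n + \delta + e^{-n\tau/8}$, whose $\limsup$ as $n \to \infty$ is $\delta$, and sending $\delta \to 0$ completes the verification.

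The main obstacle is that the numerator bound $n K(b/h_n)$ carries a factor of $n$, so merely having one training point near $X$ would leave a ratio like $n \rho_n$, which need not vanish. Killing this $n$ requires that a constant fraction of the training sample lie in $B(X,\epsilon)$, which is precisely what the Chernoff bound plus almost-sure positivity of $p(X)$ delivers. The strength of condition~1 (faster than polynomial decay along every dilation) is exactly what makes $\rho_n \to 0$ given only $h_n \to 0$ from condition~2; a polynomially decaying kernel would not suffice.
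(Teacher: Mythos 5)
Your proof is correct and its core mechanism is the same as the paper's: verify the uniform-weight-decay hypothesis of Theorem~\ref{thm_stone_cons} by bounding the offending weight mass through the ratio $K(b/h_n)/K((a+\epsilon)/h_n)\to 0$ together with a Chernoff guarantee that a constant fraction of the sample lies near $X$. The one place you diverge is in how that guarantee is obtained: the paper invokes its $(\epsilon,\gamma,\alpha)$-decomposition (Lemma~\ref{lem_balls}), covering most of the mass by finitely many balls of mass at least $\alpha$ and union-bounding over them, whereas you condition directly on $p(X)=\D_\X(B(X,\epsilon))\ge\tau$, which holds with probability $1-\delta$ for small $\tau$ since $X$ lies in the support almost surely; this sidesteps the decomposition lemma entirely and is, if anything, slightly cleaner for this particular corollary (the decomposition is reused elsewhere in the paper, which is why it is factored out as a lemma). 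Your explicit assumption that $K$ is non-increasing is worth keeping: the paper's proof compares $K(d(x_j,x)/h_n)$ to $K(d(x_i,x)/h_n)$ using only the fixed-$c$ limit condition, a step that tacitly requires monotonicity (or a uniform version of condition~1) to be airtight, so you are making explicit something the paper leaves implicit rather than adding a genuinely new hypothesis.
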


Observe that Condition 1. is satisfied for any $K(x)$ that decreases more rapidly than an inverse polynomial -- and is hence satisfied by most popular kernels like the Gaussian kernel. Is the condition on $K$ in Corollary~\ref{thm_kernel} necessary? The following example illustrates that a kernel classifier with any arbitrary $K$ is not necessarily $r$-consistent. This indicates that some sort of condition needs to be imposed on $K$ to ensure $r$-consistency; finding a tight necessary condition however is left for future work. 

 \paragraph{Example 3} Let $\X = [-1, 1]$ and let $\D$ be a distribution with $p_\D(-1, -1) = 0.1$ and $p_\D(1, 1) = 0.9$. Clearly, $\D$ is $0.3$-separated. Let $K(x) = e^{-\min(|x|, 0.2)^2}$. Let $h_n$ be any sequence with $\lim_{n \to \infty} h_n = 0$ and $\lim_{n \to \infty} nh_n = \infty$. Let $W$ be the weight classifier with input $S = \{(x_1, y_1), (x_2, y_2), \dots, (x_n, y_n)\}$ such that $$w_i^S(x) = \frac{K(\frac{|x- x_i|}{h_n})}{\sum_{j=1}^n K(\frac{|x-x_j|}{h_n})}.$$ $W$ can be shown to satisfy all the conditions of Theorem \ref{thm_stone} (the proof is analogous to the case for a Gaussian Classifier), and is therefore consistent. However, $W$ does not learn a robust classifier on $\D$ for $r = 0.3$. 

Consider $x = -0.7$. For any $\{(x_1, y_1), (x_2, y_2), \dots, (x_n, y_n)\} \sim \D^n$, all $x_i$ will either be $-1$ or $1$. Therefore, since $K(|x - (-1)|) = K(|x - 1|)$, it follows that $w_i^S(x) = \frac{1}{n}$ for all $1 \leq i \leq n$. Since $x_i = 1$ with probability $0.9$, it follows that with high probability $x$ will be classified as $1$ which means that $f$, the output of $W$, is not robust at $x = -1$. Thus $f$ has astuteness at most $0.9$ which means that $W$ is \textit{not} \rcons\ for $r=0.3$.  \\ \\

\section{General Distributions}\label{sec:general}

We next consider more general data distributions, where data from different classes may be close together in space, and may even overlap. Observe that unlike the $r$-separated case, here there may be no classifier with astuteness one. Thus, a natural question is: what does the optimally astute classifier look like, and how can we build non-parametric classifiers to this limit?

\subsection{The $r$-Optimal Classifier and Adversarial Pruning}

\cite{YRWC19} propose a large-sample limit -- called the $r$-optimal -- and show that it is analogous to the Bayes Optimal classifier for robustness. More specifically, given a data distribution $D$, to find the $r$-optimal classifier, we solve the following optimization problem.  

\begin{equation}\label{optim_prob}
\begin{split}
\max_{S_{+1}, S_{-1}} &\int_{x \in S_{+1}} p(y=+1|x)d\mu_{\D}(x) + \\
&\int_{x \in S_{-1}} p(y=-1|x)d\mu_{\D}(x) \\
&\text{ subject to } d(S_{+1}, S_{-1}) > 2r 
\end{split}
\end{equation}

Then, the $r$-optimal classifier is defined as follows. 

\begin{defn}
\cite{YRWC19} Fix $r, \D$. Let $S_{+1}^*$ and $S_{-1}^*$ be any optimizers of (\ref{optim_prob}). Then the $r$-optimal classifier, $\b_r$ is any classifier such that $\b_r(x) = j$ whenever $d(S_j^*, x) \leq r$. 
\end{defn}

\cite{YRWC19} show that the $r$-optimal classifier achieves the optimal astuteness -- out of all classifiers on the data distribution $\D$; hence, it is a robustness analogue to the Bayes Optimal Classifier. Therefore, for general distributions, the goal in robust classification is to find non-parametric algorithms that output classifiers that converge towards $\b_r$. 

To find robust classifiers, \cite{YRWC19} propose Adversarial Pruning -- a defense method that preprocesses the training data by making it better separated. More specifically, Adversarial Pruning takes as input a training dataset $S$ and a radius $r$, and finds the largest subset of the training set where differently labeled points are at least distance $2r$ apart.

\begin{defn}
A set $S_r \subset \X \times \Y$ is said to be \textbf{$r$-separated} if for all $(x_1, y_1), (x_2, y_2) \in S_r$, if $y_1 \neq y_2$, then $d(x_1, x_2) > 2r$. To \textbf{adversarially prune} a set $S$ is to return its largest $r$-separated subset. We let $\ap(S, r)$ denote the result of adversarially pruning $S$.  
\end{defn}

Once an $r$-separated subset $S_r$ of the training set is found, a standard non-parametric method is trained on $S_r$.  While~\cite{YRWC19} show good empirical performance of such algorithms, no formal guarantees are provided. We next formally characterize when adversarial pruning followed by a non-parametric method results in a classifier that is provably $r$-consistent.

Specifically, we consider analyzing the general algorithm provided in Algorithm \ref{alg:gen}.

\begin{algorithm}[tb]
   \caption{\ga}
   \label{alg:gen}
\begin{algorithmic}
   \STATE {\bfseries Input:} $S \sim \D^n$, weight function $W$, 
   robustness radius $r$
   \STATE $S_r \leftarrow \ap(S, r)$
   \STATE{\bfseries Output:} $W_{S_r}$
\end{algorithmic}
\end{algorithm}

\subsection{Convergence Guarantees}

We begin with some notation. For any weight function $W$ and radius $r > 0$, we let $\ga(W,r)$ represent the weight function that outputs weights for $S \sim \D^n$ according to $\ga(S, W, r)$. In particular, this can be used to convert any weight function algorithm into a new weight function which takes robustness into account. A natural question is, for which weight functions $W$ is $\ga(W,r)$ \rcons? Our next theorem provides sufficient conditions for this.

\begin{thm}\label{thm_weight_general}
Let $W$ be a weight function over $\X \times \Y$, and let $\D$ be a distribution over $\X \times \Y$. Fix $r >0$. Let $S_r = \ap(S, r)$.  For convenience, relabel $x_i, y_i$ so that $S_r = \{(x_1, y_1), (x_2, y_2), \dots, (x_m, y_m)\}$. Suppose that for any $0 < a < b,$ 
\begin{equation*}\label{condition}
\lim_{n \to \infty} \mathbb{E}_{S \sim \D^n}\big [ \frac{1}{m} \sum_{i = 1}^m \sup_{x \in B(x_i, a)} \sum_{j = 1}^m w_j^{S_r}(x)I_{||x_j - x|| > b} \big] = 0. 
\end{equation*}
Then $\ga(W,r)$ is \rcons\emph{ }with respect to $\D$. 
\end{thm}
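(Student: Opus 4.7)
The strategy is to reduce Theorem~\ref{thm_weight_general} to the $r$-separated Theorem~\ref{thm_stone_cons} by viewing $S_r$ as, asymptotically, a sample from an auxiliary $r$-separated distribution. Fix a pair of optimizers $(S^*_{+1}, S^*_{-1})$ of the $r$-optimal program~(\ref{optim_prob}), and let $A = \{(x, +1) : x \in S^*_{+1}\} \cup \{(x, -1) : x \in S^*_{-1}\} \subset \X \times \Y$. Let $\mu^*$ denote the restriction of $\D$ to $A$; this is a sub-probability measure with total mass $A_r^*(\D)$, and its normalization $\tilde{\D}^* := \mu^*/A_r^*(\D)$ is $r$-separated because $d(S^*_{+1}, S^*_{-1}) > 2r$.

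I would next show that $m/n \to A_r^*(\D)$ in probability and that the empirical distribution of $S_r$ converges weakly to $\tilde{\D}^*$. The lower bound $m/n \geq A_r^*(\D) - o(1)$ is immediate: retaining just the points $(x_i, y_i) \in S$ with $x_i \in S^*_{y_i}$ already yields an $r$-separated subset, of size $A_r^*(\D) n - o(n)$ by the strong law. The upper bound $m/n \leq A_r^*(\D) + o(1)$ uses the optimality of $(S^*_{+1}, S^*_{-1})$ in~(\ref{optim_prob}) together with a uniform law of large numbers over all admissible $r$-separated pairs of subsets of $\X$. The matching bounds force any "excess" empirical mass of $S_r$ outside $A$ to vanish, which yields the weak convergence.

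With this in hand, the hypothesis of Theorem~\ref{thm_weight_general} becomes the empirical analogue of the strengthened Stone condition of Theorem~\ref{thm_stone_cons}: $\tfrac{1}{m}\sum_{i=1}^m$ approximates $\mathbb{E}_{X \sim \tilde{\D}^*}$. One then mirrors the proof of Theorem~\ref{thm_stone_cons} with $S_r$ in place of an i.i.d.\ $\tilde{\D}^*$-sample, exploiting the $r$-separation of $S_r$ to guarantee that, for $a = r - \gamma$ and $b$ slightly smaller than $r + \gamma$, every $x_j \in S_r$ within distance $b$ of any $x \in B(x_i, a)$ satisfies $y_j = y_i$ (so the weighted vote is uniformly for the correct label). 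This yields that $W_{S_r}$ is astute at radius $r - \gamma$ on a $\tilde{\D}^*$-mass at least $1 - \epsilon$ of points. Since $\mu^*(E) \leq \D(E)$ for every measurable $E$, this translates to $A_{r-\gamma}(W_{S_r}, \D) \geq (1 - \epsilon) A_r^*(\D) \geq A_r^*(\D) - \epsilon$, establishing $r$-consistency.

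The main obstacle is the convergence of adversarial pruning, since it is an empirical combinatorial maximization rather than a mere restriction. The upper bound $m/n \leq A_r^*(\D) + o(1)$ demands a uniform law of large numbers over the (non-parametric) class of $r$-separated subset pairs of $\X$, which requires a VC-type or covering-number argument tailored to the geometry of $\X$. The argument must also be robust to non-uniqueness of optimizers in~(\ref{optim_prob}) and to boundary phenomena (points where $d(\cdot, S^*_y) = r$ exactly, or where $\eta(x) = 1/2$); the $\gamma$ buffer in the definition of $r$-consistency is what ultimately allows us to absorb these issues in the final astuteness bound.
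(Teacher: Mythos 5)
Your reduction to the $r$-separated case via the auxiliary distribution $\tilde{\D}^*$ is a genuinely different route from the paper's, but it stalls exactly where you flag the ``main obstacle'': the claims that $m/n \to A_r^*(\D)$ from above and that the empirical distribution of $S_r$ converges weakly to $\tilde{\D}^*$ are asserted, not proved, and they are the hardest part of your plan. The weak-convergence claim is particularly delicate: $S_r$ is the output of a combinatorial maximization on the sample, not an i.i.d.\ draw from anything; it need not concentrate on $S^*_{+1}\cup S^*_{-1}$ (the optimizers of (\ref{optim_prob}) are not unique, and the empirical maximizer can track different population optimizers in different regions); and ``mirroring the proof of Theorem~\ref{thm_stone_cons} with $S_r$ in place of an i.i.d.\ $\tilde{\D}^*$-sample'' silently reuses the i.i.d.\ structure on which Lemma~\ref{lem_expectation} depends. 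So as written this is a genuine gap, not just a deferred technicality.

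The paper's proof shows that none of this machinery is needed; in particular the upper bound on $m/n$ is never used. It works with the empirical distribution $\D_S$ (uniform on $S$): the hypothesis gives that all but a $3\epsilon$-fraction of the $x_i \in S_r$ satisfy $\sup_{x \in B(x_i, r-\gamma)} \sum_j w_j^{S_r}(x) I_{||x_j - x|| > r} \le 1/3$, and the $r$-separation of $S_r$ then forces $W_{S_r}(x) = y_i$ on all of $B(x_i, r-\gamma)$ for those $i$, so $A_{r-\gamma}(W_{S_r}, \D_S) \ge |S_r|/n - 3\epsilon$. Only a one-sided comparison of $|S_r|/n$ with $A_r^*(\D)$ is then required: since no classifier is astute at two oppositely labeled sample points within distance $2r$, $|S_r|/n \ge A_r(\b_r, \D_S)$, and a plain Chernoff bound applied to the \emph{fixed} classifier $\b_r$ gives $A_r(\b_r, \D_S) \ge A_r^*(\D) - \epsilon$; if $|S_r|/n$ overshoots $A_r^*(\D)$, that only helps. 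Finally, empirical astuteness is transferred to population astuteness by Lemma~\ref{gen_thm}, a uniform convergence statement over the $2^s$ unions of balls of a finite $(\epsilon,\gamma,\alpha)$-decomposition --- a far smaller class than ``all $r$-separated pairs of subsets of $\X$,'' which is why no VC or covering-number argument over that class is needed. To salvage your route, drop the distributional convergence of $S_r$ entirely and replace it with these two one-sided, finite-class arguments.
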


\paragraph{\textbf{Remark}:}
There are two important differences between the conditions in Theorem \ref{thm_weight_general} and Theorem~\ref{thm_stone_cons}.
\begin{enumerate}
	\item We replace $S$ with $S_r$.
	\item The expectation over $X \sim \D_\X$ is replaced with an average over $\{x_1, x_2, \dots, x_m\}$. The intuition here is that we are replacing $\D$ with a uniform distribution over $S_r$. While $\D$ may not be $r$-separated, the uniform distribution over $S_r$ is, and represents the region of points where our classifier is astute. 
\end{enumerate}

A natural question is what satisfies the conditions in Theorem~\ref{thm_weight_general}. We next show that $k_n$-nearest neighbors and kernel classifiers with rapidly decaying kernel functions continue to satisfy the conditions in Theorem \ref{thm_weight_general}; this means that these classifiers, when combined with Adversarial Pruning, will converge to $r$-optimal classifiers in the large sample limit.

\begin{cor}\label{thm_NN_gen}
Let $k_n$ be a sequence with $\lim_{n \to \infty} \frac{k_n}{n} = 0$, and let $M$ denote the $k_n$-nearest neighbor algorithm. Then for any $r > 0$, $\ga(M, r)$ is \rcons.
\end{cor}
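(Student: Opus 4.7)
The plan is to verify the sufficient condition from Theorem \ref{thm_weight_general} for the $k_n$-nearest neighbor weight function $M$ applied to the pruned sample $S_r = \ap(S, r)$ of size $m$, under which $w_j^{S_r}(x) = 1/k_m$ if $x_j$ is among the $k_m$ nearest members of $S_r$ to $x$ and zero otherwise. Fix $0 < a < b$. The core geometric observation is that for any $x \in B(x_i, a)$ and any $y \in S_r$ with $d(x_i, y) \leq b - a$, the triangle inequality gives $d(x, y) \leq b$; hence whenever $|S_r \cap B(x_i, b-a)| \geq k_m$, the inner quantity $\sum_{j=1}^m w_j^{S_r}(x) I_{\|x_j - x\|>b}$ is identically zero for every $x \in B(x_i, a)$.

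It remains to bound the number of ``bad'' indices $i$ for which $|S_r \cap B(x_i, b-a)| < k_m$, which I will do via a covering argument. Since $\X$ is totally bounded, I can cover it by $N = N(a, b, \X)$ closed balls of radius $(b-a)/2$; if $x_i$ and any $y \in S_r$ share such a covering ball then $d(x_i, y) \leq b - a$, so $|S_r \cap B(x_i, b-a)|$ is at least the count of $S_r$-points in the covering ball containing $x_i$. A covering ball with fewer than $k_m$ points of $S_r$ therefore contributes at most $k_m$ bad indices, yielding a total of at most $N k_m$ bad indices. Using that each supremum term is bounded by $1$,
\begin{equation*}
\frac{1}{m} \sum_{i=1}^m \sup_{x \in B(x_i, a)} \sum_{j=1}^m w_j^{S_r}(x) I_{\|x_j - x\| > b} \;\leq\; \frac{N k_m}{m}.
\end{equation*}

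The main obstacle is showing $\mathbb{E}_S[k_m/m] \to 0$, which requires a lower bound on $m$. The case $A_r^*(\D) = 0$ is vacuous since the conclusion $A_{r-\gamma}(M(S), \D) \geq A_r^*(\D) - \epsilon$ holds trivially; so assume $A_r^*(\D) > 0$. Using the optimizers $S_{+1}^*, S_{-1}^*$ of (\ref{optim_prob}), the subset $S' = \{(x_i, y_i) \in S : x_i \in S_{y_i}^*\}$ is $r$-separated, so $m \geq |S'|$. By the definition of the $r$-optimal classifier, $\mathbb{E}[|S'|/n] = A_r^*(\D)$, and a standard Chernoff bound gives $m \geq n A_r^*(\D)/2$ with probability $1 - o(1)$; on this event $m \to \infty$, and since $k_j/j \to 0$ we obtain $k_m/m \to 0$. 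Combining with the trivial bound $k_m/m \leq 1$ on the complementary event, dominated convergence yields $\mathbb{E}_S[k_m/m] \to 0$, so Theorem \ref{thm_weight_general} applies and gives \rconsy{} of $\ga(M, r)$.
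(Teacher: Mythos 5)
Your proof is correct, and the key counting step takes a genuinely different route from the paper's. The paper verifies the condition of Theorem~\ref{thm_weight_general} via an $(\epsilon,\gamma,\alpha)$-decomposition of $\D$: it covers all but $\epsilon$ of the mass with $s$ balls of radius $\gamma$ chosen so that $a+2\gamma<b$, uses a Chernoff bound together with Lemma~\ref{lem_half} to show that a $(1-5\epsilon)$-fraction of $S_r^\X$ lies in balls containing at least $k_n$ pruned points (via Lemma~\ref{lem_point_count}), and concludes the average is at most $5\epsilon$. Your covering argument is deterministic and sample-independent: total boundedness of $\X$ gives a fixed $(b-a)/2$-cover of size $N$, every ``bad'' index must sit in a covering ball with fewer than $k_m$ pruned points, and so the average is at most $N k_m/m$ for \emph{every} sample --- no decomposition, no mass bookkeeping, and no analogue of Lemma~\ref{lem_point_count} needed. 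That is arguably cleaner. On the other hand, your lower bound on $m=|S_r|$ is more roundabout than necessary: the paper's Lemma~\ref{lem_half} gives $m\geq n/2$ deterministically (the all-positive or all-negative half of $S$ is already $r$-separated), which would let you skip the case split on $A_r^*(\D)=0$, the detour through the optimizers of (\ref{optim_prob}), and the extra Chernoff bound --- though your version of that step is also valid. One small interpretational note: the paper runs $k_n$-NN (indexed by the original sample size) on $S_r$, whereas you use $k_m$; both readings of Algorithm~\ref{alg:gen} go through, and with $m\geq n/2$ the two are essentially interchangeable.
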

\paragraph{\textbf{Remark}:} Corollary \ref{thm_NN_gen} gives a formal guarantee in the large sample limit for the modified nearest-neighbor algorithm proposed by \cite{YRWC19}.

\begin{cor}\label{thm_kern_gen}
Let $W$ be a kernel classifier over $\X \times \Y$ constructed from $K$ and $h_n$. Suppose the following properties hold for $K$ and $h_n$.
\begin{enumerate}
	\item For any $c > 1$, $\lim_{x \to \infty} \frac{K(cx)}{K(x)} = 0.$
	\item $\lim_{n \to \infty} h_n = 0.$
\end{enumerate}
Then for any $r > 0$, $\ga(W, r)$ is \rcons.
\end{cor}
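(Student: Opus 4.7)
The plan is to verify the hypothesis of Theorem~\ref{thm_weight_general} for the kernel weight function computed on $S_r = \ap(S,r)$. Write $\phi_j(x) := K(d(x,x_j)/h_n)$ so $w_j^{S_r}(x) = \phi_j(x)/\sum_k \phi_k(x)$, and fix $0 < a < b$ (without loss of generality $b \geq 2a$, else we shrink $a$, which only strengthens the hypothesis). The target is
\begin{equation*}
\mathbb{E}_S\Big[\tfrac{1}{m}\sum_{i=1}^m\sup_{x\in B(x_i,a)}\sum_{j:d(x_j,x)>b} w_j^{S_r}(x)\Big] \to 0.
\end{equation*}

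The core ratio bound is as follows. For $x \in B(x_i,a)$ and any $x_k \in S_r$ with $d(x_k,x_i)\leq a$, the triangle inequality gives $d(x_k,x)\leq 2a$, so (using that $K$ is non-increasing, which is standard for kernels) $\phi_k(x) \geq K(2a/h_n)$. Letting $N_i := |\{k : d(x_k,x_i)\leq a\}|$, the denominator is at least $N_i K(2a/h_n)$, while any $\phi_j(x)$ with $d(x_j,x) > b$ is at most $K(b/h_n)$. Combining these yields
\begin{equation*}
\sup_{x\in B(x_i,a)}\sum_{j:d(x_j,x)>b} w_j^{S_r}(x) \;\leq\; \frac{m}{N_i}\cdot\frac{K(b/h_n)}{K(2a/h_n)}.
\end{equation*}
By Condition~(1) applied with $c = b/(2a) > 1$ and $2a/h_n \to \infty$, the ratio $\epsilon_n := K(b/h_n)/K(2a/h_n)$ tends to $0$.

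The main obstacle is controlling the extra factor $\sum_i 1/N_i$ so that $\epsilon_n \sum_i 1/N_i$ vanishes in expectation. The plan is to split $S_r$ by local density. Fix a threshold $\alpha > 0$ and call $x_i \in S_r$ dense if $\mu_\D(B(x_i,a)) \geq \alpha$, sparse otherwise. For dense $x_i$, Chernoff concentration gives $N_i \gtrsim \alpha n$ with high probability (adversarial pruning retains enough points in label-homogeneous local neighborhoods, at worst discarding the minority class locally), so the dense contribution to the expectation is at most $O(\epsilon_n/\alpha)$. For sparse $x_i$ we revert to the trivial bound $\sup_x\sum_j w_j^{S_r}(x)I_{d(x_j,x)>b} \leq 1$ and note that, since $\X$ is totally bounded, a finite covering by balls of radius $a/2$ shows that $\{x : \mu_\D(B(x,a))=0\}$ has $\D_\X$-measure zero, so $\P_{X \sim \D_\X}[\mu_\D(B(X,a)) < \alpha] \to 0$ as $\alpha \to 0$; hence the expected fraction of sparse samples in $S$ (and thus in $S_r$) is $o_\alpha(1)$.

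Choosing $\alpha = \alpha_n \to 0$ slowly enough that $\epsilon_n/\alpha_n \to 0$ (feasible because $\epsilon_n \to 0$) and $o_{\alpha_n}(1) \to 0$, both contributions to the expectation vanish. This verifies the hypothesis of Theorem~\ref{thm_weight_general}, giving $r$-consistency of $\ga(W,r)$.
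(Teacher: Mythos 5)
Your overall strategy matches the paper's: verify the hypothesis of Theorem~\ref{thm_weight_general} by bounding each far weight through the ratio of kernel values and lower-bounding the normalizing denominator by a count of retained points near $x_i$. The gap is in how you lower-bound $N_i = |\{k : x_k \in S_r,\ d(x_k,x_i)\le a\}|$. You define density with respect to $\mu_\D$, use Chernoff to get many points of $S$ in $B(x_i,a)$, and then assert that adversarial pruning ``retains enough points in label-homogeneous local neighborhoods, at worst discarding the minority class locally.'' That is not justified: $\ap(S,r)$ returns the \emph{globally} largest $r$-separated subset, and the global optimum can discard almost all sample points in a given ball $B(x_i,a)$ while still retaining $x_i$ itself (for instance, when the points near $x_i$ conflict with a much larger mass of oppositely labeled points just outside the ball, so that keeping the outside mass and dropping $x_i$'s neighbors is optimal). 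So $\mu_\D(B(x_i,a)) \ge \alpha$ gives $|S \cap B(x_i,a)| = \Omega(\alpha n)$ but not $N_i = \Omega(\alpha n)$, and your dense-point contribution $O(\epsilon_n/\alpha)$ does not follow.

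The paper closes exactly this hole by counting on the pruned set itself rather than on $\D$: fix a finite $(\epsilon,\gamma,\alpha)$-decomposition $B_1,\dots,B_s$ (Lemma~\ref{lem_balls}), show $|S_r^\X \cap \cup_1^s B_i| \ge m(1-4\epsilon)$ using $|S_r|\ge |S|/2$ (Lemma~\ref{lem_half}), and then apply the pigeonhole bound of Lemma~\ref{lem_point_count}: all but $m\epsilon$ of those points lie in balls containing at least $m\epsilon/s$ points of $S_r$. This yields a denominator lower bound of order $m/s$ for all but a $5\epsilon$ fraction of the $x_i \in S_r$, with no claim whatsoever about what pruning does locally; the exceptional fraction is absorbed by the trivial bound $Z_i \le 1$. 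If you replace your $\mu_\D$-density split with this empirical, post-pruning pigeonhole over a fixed finite cover, your argument goes through. Two smaller points: your step $\phi_k(x)\ge K(2a/h_n)$ invokes monotonicity of $K$, which is not among the stated hypotheses (the paper's ratio step has a comparable implicit uniformity requirement, so this is a shared looseness rather than a new error); and taking $\alpha=\alpha_n\to 0$ would require the Chernoff concentration to hold uniformly over a shrinking density threshold, another complication that the fixed finite cover avoids.
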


Observe again that Condition 1. is satisfied by any $K$ that decreases more rapidly than an inverse polynomial kernel; it is thus satisfied by most popular kernels, such as the Gaussian kernel. 

\section{Validation}
\begin{figure*}[ht]
\vskip 0.2in
\begin{center}
\subfloat[][Noiseless Histogram]{\includegraphics[width=.29\textwidth]{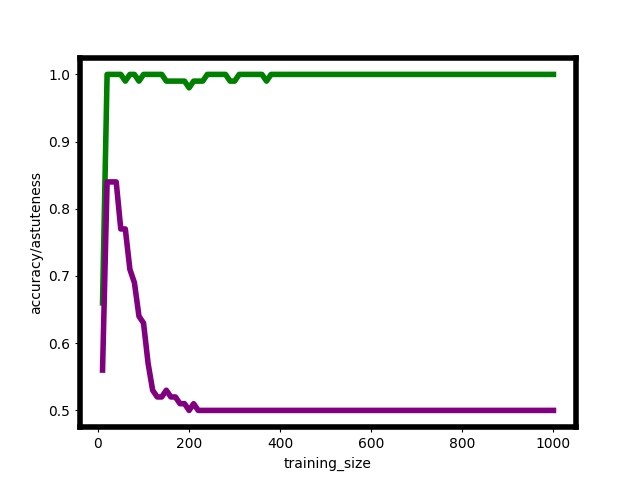}}\quad
   \subfloat[][Noisy Histogram]{\includegraphics[width=.29\textwidth]{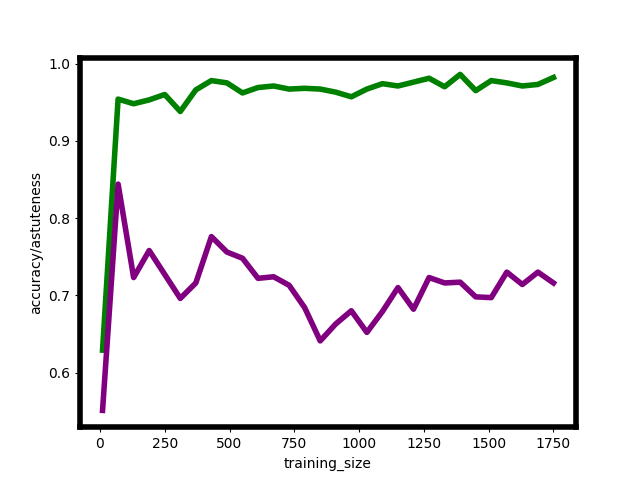}} \quad
   \subfloat[][Histogram trained on 500 samples]{\includegraphics[width=.29\textwidth]{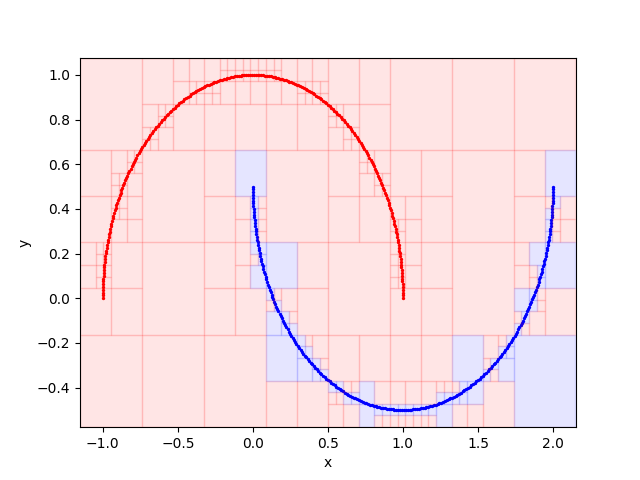}}\\
   \subfloat[][Noiseless 1-NN]{\includegraphics[width=.29\textwidth]{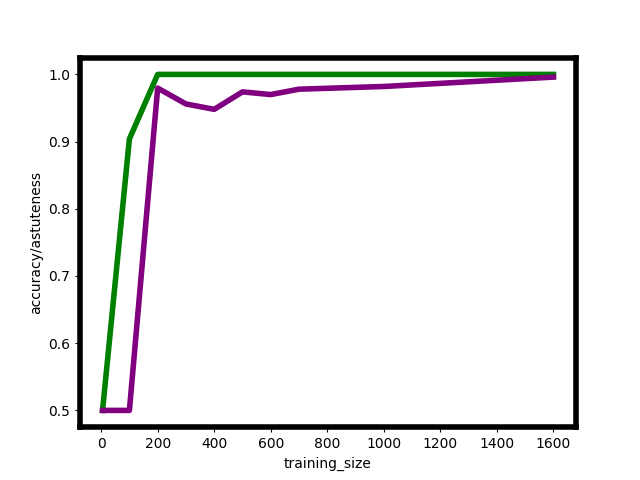}} \quad
   \subfloat[][Noisy 1-NN]{\includegraphics[width=.29\textwidth]{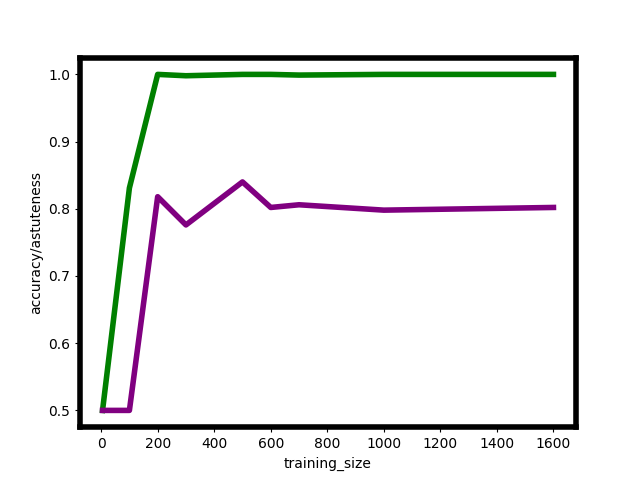}}\quad
   \subfloat[][Histogram trained on 3000 samples]{\includegraphics[width=.29\textwidth]{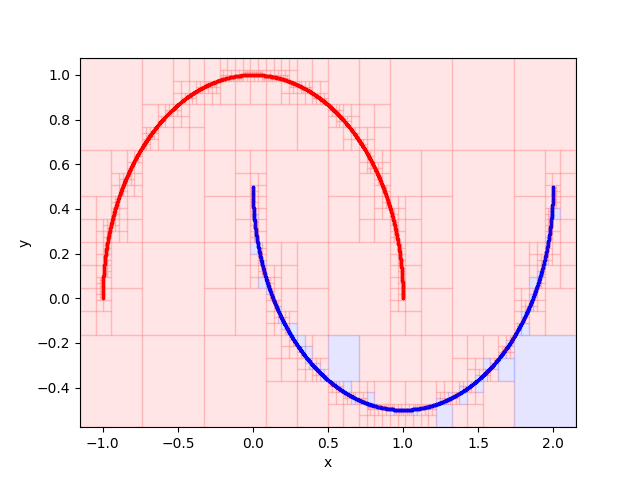}}
\end{center}
\caption{Empirical accuracy/astuteness of different classifiers as a function of training sample size. Accuracy is shown in green, astuteness in purple. Left : Noiseless Setting. Right: Noisy Setting. Top Row: Histogram Classifier, Bottom Row: 1-Nearest Neighbor}
\label{fig:1}
\vskip -0.2in
\end{figure*}

Our theoretical results are, by nature, large sample; we next validate how well they apply to the finite sample case by trying them out on a simple example. In particular, we ask the following question:

\begin{quote}
How does the robustness of non-parametric classifiers change with increasing sample size?
\end{quote}

This question is considered in the context of two simple non-parametric classifiers -- one nearest neighbor (which is guaranteed to be $r$-consistent) and histograms (which is not). To be able to measure performance with increasing data size, we look at a simple synthetic dataset -- the Half Moons. 

\subsection{Experimental Setup}

\paragraph{Classifiers and Dataset.} We consider two different classification algorithms -- one nearest neighbor (NN) and a Histogram Classifier (HC).  We use the Halfmoon dataset with two settings of the gaussian noise parameter $\sigma$, $\sigma = 0$ (Noiseless) and $\sigma =0.08$ (Noisy). For the Noiseless setting, observe that the data is already $0.1$-separated; for the Noisy setting, we use Adversarial Pruning (Algorithm~\ref{alg:gen}) with parameter $r = 0.1$ for both classification methods.

\paragraph{Performance Measure.} We evaluate robustness with respect to the $\ell_{\infty}$ metric, that is commonly used in the adversarial examples literature. Specifically, for each classifier, we calculate the {\em{empirical astuteness}}, which is the fraction of test examples on which it is astute.

Observe that computing the empirical astuteness of a classifier around an input $x$ amounts to finding the adversarial example that is {\em{closest to}} $x$ according to the $\ell_{\infty}$ norm. For the $1$-nearest neighbor, we do this using the optimal attack algorithm proposed by Yang et. al.~\cite{YRWC19}. For the histogram classifier, we use the optimal attack framework proposed by~\cite{YRWC19}, and show that the structure of the classifier can be exploited to solve the convex program efficiently. Details are in Appendix C.

We use an attack radius of $r = 0.1$ for the Noiseless setting, and $r = 0.09$ for the Noisy setting. For all classification algorithms, we plot the empirical astuteness as a function of the training set size. As a baseline, we also plot their standard accuracy on the test set. 

\subsection{Results}

The results are presented in Figure~\ref{fig:1}; the left two panels are for the Noiseless setting while the two center ones are for the Noisy setting.  

The results show that as predicted by our theory, for the Noiseless setting, the empirical astuteness of nearest neighbors converges to $1$ as the training set grows. For Histogram Classifiers, the astuteness converges to $0.5$ -- indicating that the classifier may grow less and less astute with higher sample size even for well-separated data. This is plausibly because the cell size induced by the histogram grows smaller with growing training data; thus, the classifier that outputs the default label $-1$ in empty cells is incorrect on adversarial examples that are close to a point with $+1$ label, but belongs to a different, empty cell. The rightmost panels in Figure~\ref{fig:1} provide a visual illustration of this process. 

For the Noisy setting, the empirical astuteness of adversarial pruning followed by nearest neighbors converges to $0.8$. For histograms with adversarial pruning, the astuteness converges to $0.7$, which is higher than the noiseless case but still clearly sub-optimal.

\subsection{Discussion}

Our results show that even though our theory is asymptotic, our predictions continue to be relevant in finite sample regimes. In particular, on well-separated data, nearest neighbors that we theoretically predict to be intrinsically robust is robust; histogram classifiers, which do not satisfy the conditions in Theorem~\ref{thm_stone_cons} are not. Our predictions continue to hold for data that is not well-separated. Nearest neighbors coupled with Adversarial
Pruning continues to be robust with growing sample size, while histograms continue to be non-robust. Thus our theory is confirmed by practice.

\section{Conclusion}

In conclusion, we rigorously analyze when non-parametric methods provide classifiers that are robust in the large sample limit. We provide a general condition that characterizes when non-parametric methods are robust on well-separated data, and show that Adversarial Pruning of~\cite{YRWC19} works on data that is not well-separated. 

Our results serve to provide a set of guidelines that can be used for designing non-parametric methods that are robust and accurate on well-separated data; additionally, we demonstrate that when data is not well-separated, preprocessing by adversarial pruning~\cite{YRWC19} does lead to optimally astute solutions in the large sample limit.

\section*{Acknowledgements}

We thank NSF under CNS 1804829  for research support.

\bibliography{refs}
\bibliographystyle{icml2020}

\newpage

\onecolumn

\appendix

\section{Proofs for $r$-separated distributions}

For any distribution $\D$ over $\X \times Y$, it will be convenient to use the following notation: for any measurable $S \subset \X$, let $\P_\D[S] = \P_{(x,y) \sim \D}[x \in S]$. The following definition will be central to our proofs. 

\begin{defn}
Let $\D$ be a distribution over $\X \times Y$. An \textbf{$(\epsilon, \gamma, \alpha)$-decomposition} of $\D$ is a finite set of closed balls $B_1, B_2, \dots, B_s \subset \X$ each with radius $\gamma$ such that $$\P_\D[\cup_1^s B_i] > 1 - \epsilon,$$ and such that $\P_\D[B_i] \geq \alpha > 0$ for $1 \leq i \leq s$. 
\end{defn}

\begin{lem}\label{lem_balls}
Let $\X$ be a totally bounded metric space. For any distribution $\D$, and $\epsilon, \gamma > 0$, there exists $\alpha > 0$ such that $\D$ admits a $(\epsilon, \gamma, \alpha)$-decomposition. 
\end{lem}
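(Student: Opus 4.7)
The plan is to use total boundedness to produce a finite candidate cover by $\gamma$-balls, and then throw away the balls that carry too little probability. Since there are only finitely many balls to discard, we can pick the threshold $\alpha$ small enough that the discarded mass is negligible.

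More concretely, my first step is to invoke total boundedness of $\X$: for the given $\gamma > 0$ there is a finite set of centers $c_1, \dots, c_N \in \X$ such that the closed balls $B(c_1, \gamma), \dots, B(c_N, \gamma)$ cover $\X$. (A standard way to get this is to take a finite $\gamma/2$-net and then enlarge to radius $\gamma$; either way $N$ depends only on $\X$ and $\gamma$.) Because the $B(c_i, \gamma)$ cover $\X$, we have $\P_\D[\cup_{i=1}^N B(c_i, \gamma)] = 1$.

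Next I set the threshold. Define $\alpha = \epsilon/(2N)$, let $I = \{i : \P_\D[B(c_i, \gamma)] \geq \alpha\}$, and take the proposed decomposition to be $\{B(c_i, \gamma) : i \in I\}$. By construction each kept ball has mass at least $\alpha$. For the coverage condition, a union bound on the discarded balls gives
\[
\P_\D\big[\textstyle\bigcup_{i \notin I} B(c_i, \gamma)\big] \;\leq\; \sum_{i \notin I} \P_\D[B(c_i, \gamma)] \;<\; N \cdot \alpha \;=\; \epsilon/2,
\]
so
\[
\P_\D\big[\textstyle\bigcup_{i \in I} B(c_i, \gamma)\big] \;\geq\; \P_\D\big[\textstyle\bigcup_{i=1}^N B(c_i, \gamma)\big] - \P_\D\big[\textstyle\bigcup_{i \notin I} B(c_i, \gamma)\big] \;>\; 1 - \epsilon,
\]
which is exactly the required decomposition property.

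There is no real obstacle here; the only mild subtlety is making sure the discarded mass is controlled by a union bound (so overlaps among kept and discarded balls do not cause trouble) and choosing $\alpha$ strictly less than $\epsilon/N$ so the inequality $\P_\D[\cup_{i\in I} B_i] > 1-\epsilon$ is strict, matching the definition of an $(\epsilon,\gamma,\alpha)$-decomposition. The set $I$ could in principle be empty, but that is harmless: then $\epsilon$ would have to be $\geq 1$, which is outside the regime of interest, and in any case the statement only asks for \emph{some} $\alpha > 0$, so we may shrink $\alpha$ further if we wish.
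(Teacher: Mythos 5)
Your proof is correct and follows essentially the same route as the paper's: extract a finite cover of $\X$ by $\gamma$-balls (the paper does this via a compactness argument on a large ball $S_j$ with $\P_\D(S_j) > 1-\epsilon$, which is redundant here since $\X$ is already totally bounded) and then prune low-mass balls. The only cosmetic difference is in the pruning: the paper discards only zero-mass balls and sets $\alpha$ to be the minimum remaining mass, while you fix $\alpha = \epsilon/(2N)$ up front and control the discarded mass by a union bound; both are valid.
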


\begin{proof}
Fix any $x \in \X$ and $\epsilon, \gamma > 0$. Then the sequence of balls $\{S_i = B(x, i)\}$ has union equal to $\X$. Therefore, there exists $j$ such that $P_\D(S_j) > 1 - \epsilon$. Since $S_j$ is totally bounded and complete, it is compact. Let $B^o(x, a)$ denote the open ball centered at $x$ with radius $a$. Therefore, taking an open cover of $S_j$, $\{B^o(x, \gamma): x \in S_j\}$, we can take a finite subcover $\{B_1^o, B_2,^o, \dots, B_t^o\}$ that cover $S_j$. Discarding balls such that $\P_\D(B_i^o) = 0$ and taking the closure of each ball gives the desired result, with $\alpha = \min_{i}P_\D(B_i)$.  
\end{proof}

To prove Theorem \ref{thm_stone_cons}, we use the following lemma. 

\begin{lem}\label{lem_expectation}
Let $\D$ be a distribution over $\X \times \Y$, and let $B_1, B_2, \dots, B_s$ be a $(\epsilon, \gamma, \alpha)$-decomposition of $\D$, and let $r > 3\gamma$. If $W$ is a weight function satisfying the conditions of Theorem \ref{thm_stone_cons}, then for any $\delta > 0$ there exists $N$ such that for $n \geq N$, with probability $1-\delta$ over $S \sim \D^n$, and $w_1, w_2, \dots, w_n$ learned by $W$ from $S$, $$\sup_{\{x: d(x, \cup_1^s B_i) \leq r - 3\gamma\}} \sum_1^n w_i(x)I_{d(x_i, x) > r} < \frac{1}{3}.$$
\end{lem}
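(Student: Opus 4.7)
The plan is to bound the supremum by discretizing over the ball centers of the decomposition, then transfer the resulting pointwise quantities into an expectation over $X \sim \D_\X$ by exploiting the lower bound $\P_\D[B_i] \geq \alpha$, and finally invoke the hypothesis of Theorem~\ref{thm_stone_cons} together with Markov's inequality.

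\textbf{Step 1 (geometric reduction).} Let $c_i$ denote the center of $B_i$. A short triangle inequality calculation shows that if $d(x, \cup_1^s B_j) \leq r - 3\gamma$, then $x \in B(c_i, r - 2\gamma)$ for some $i$: pick $y \in B_i$ with $d(x,y) \leq r - 3\gamma$ and add $d(y, c_i) \leq \gamma$. Hence, writing $G_i(S) = \sup_{x' \in B(c_i, r - 2\gamma)} \sum_{j=1}^n w_j^S(x') I_{d(x_j, x') > r}$, the quantity we want to bound is at most $\max_{1 \leq i \leq s} G_i(S)$, which replaces an uncountable sup with a finite maximum.

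\textbf{Step 2 (transferring to the ambient expectation).} I would bound each $\mathbb{E}_S[G_i(S)]$ by comparing it to the Theorem~\ref{thm_stone_cons} quantity with $a = r - \gamma$ and $b = r$ (which is valid since $r > 3\gamma$). The key observation is that if $X \in B_i$ and $x' \in B(c_i, r - 2\gamma)$, then $d(x', X) \leq r - \gamma$, so $B(c_i, r - 2\gamma) \subset B(X, r - \gamma)$. Therefore
\begin{equation*}
G_i(S)\cdot \mathbb{1}[X \in B_i] \;\leq\; \sup_{x' \in B(X, r - \gamma)} \sum_{j=1}^n w_j^S(x') I_{d(x_j, x') > r}.
\end{equation*}
Taking expectations over $(X, S)$ and using $\P_\D[B_i] \geq \alpha$ gives
\begin{equation*}
\alpha \cdot \mathbb{E}_S[G_i(S)] \;\leq\; \mathbb{E}_{X,S}\Bigl[\sup_{x' \in B(X, r - \gamma)} \sum_{j=1}^n w_j^S(x') I_{d(x_j, x') > r}\Bigr].
\end{equation*}
The right-hand side tends to $0$ as $n \to \infty$ by the Theorem~\ref{thm_stone_cons} hypothesis, so $\mathbb{E}_S[G_i(S)] \to 0$ for each $i$.

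\textbf{Step 3 (union bound and Markov).} Since there are only $s$ such balls (with $s, \alpha$ fixed once the decomposition is chosen), $\mathbb{E}_S[\max_i G_i(S)] \leq \sum_{i=1}^s \mathbb{E}_S[G_i(S)] \to 0$. Given $\delta > 0$, I pick $N$ so that $n \geq N$ yields $\mathbb{E}_S[\max_i G_i(S)] < \delta/3$, and then Markov's inequality gives $\P_S[\max_i G_i(S) \geq 1/3] < \delta$. Combining with Step~1, this is the advertised probability bound.

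The main obstacle is the containment argument in Step~2: it is essential that the ball $B(X, r-\gamma)$ at a random anchor $X \in B_i$ is large enough to contain the entire sup region $B(c_i, r - 2\gamma)$, since this is what lets the uniform-over-$B(X,a)$ hypothesis of Theorem~\ref{thm_stone_cons} dominate a sup that has nothing to do with $X$. The choice of padding constants ($r - 3\gamma$ for the $x$-region, $r - 2\gamma$ for the centered balls, $r - \gamma$ for the $X$-balls) is precisely what makes this telescoping of triangle inequalities work and is the one place where a careless constant would break the argument.
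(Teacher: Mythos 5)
Your proof is correct and follows essentially the same route as the paper's: both arguments use the triangle inequality to show that any witness point within distance $r-3\gamma$ of some $B_i$ lies inside $B(X, r-\gamma)$ for every $X \in B_i$, then exploit $\P_\D[B_i] \geq \alpha$ to lower-bound the Theorem~\ref{thm_stone_cons} expectation by a constant multiple of the failure quantity, and finish with a Markov-type argument. The only difference is bookkeeping --- you discretize into $s$ per-ball suprema and union-bound, whereas the paper works with a single indicator variable for the bad event --- and both are valid.
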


\begin{proof} 
Fix $\delta > 0$, and let $Y$ be the indicator variable defined as $$Y = \begin{cases} 1 & \text{ if }\sup_{\{x: d(x, \cup_1^s B_i) \leq r - 3\gamma\}} \sum_1^n w_i(x)I_{d(x_i, x) > r} \geq \frac{1}{3} \\ 0 & \text{ if }\sup_{\{x: d(x, \cup_1^s B_i) \leq r - 3\gamma\}} \sum_1^n w_i(x)I_{d(x_i, x)> r} < \frac{1}{3} \end{cases}.$$ It suffices to show that there exists $N$ such that for all $n \geq N$, $E_{S \sim \D}[Y] \leq \delta$. 

Fix $S \sim \D^n$ and suppose that $Y = 1$. Then there exists $x^*, B_i^*$ such that $d(x^*, B_i^*) \leq r - 3\gamma$ and such that $$\sum_1^n w_i(x^*)I_{d(x_i, x^*) > r} \geq \frac{1}{3}.$$ By definition, $B_i$ has radius $\gamma$, so by the triangle inequality, for any $x \in B_i^*$, $d(x, x^*) \leq 2\gamma + r - 3\gamma = r - \gamma$. This implies $x^* \in B(x, r-\gamma)$. Therefore, for any $x \in B_i^*$, $$\sup_{x' \in B(x, r-\gamma)} \sum_1^n w_i(x')I_{d(x', x_i) > r} \geq \sum_1^n w_i(x^*)I_{d(x^*, x_i) > r} \geq \frac{1}{3}.$$ By the definition of an $(\epsilon, \gamma, \alpha)$-decomposition, we have that $P_\D(B_i^*) \geq \alpha$. As a consequence, we have that $$\mathbb{E}_{X \sim \D_\X} \big [ \sup_{x' \in B(X, r-\gamma)} \sum_1^n w_i(x')I_{||x_i - x'|| > r} \big] \geq P_\D[B_i^*]\frac{1}{3} \geq \frac{\alpha}{3}.$$ Since the previous inequality is guaranteed to hold if $Y = 1$, taking the expectation over $S$ yields that $$\mathbb{E}_{S \sim \D^n} \mathbb{E}_{X \sim \D_\X} \big [ \sup_{x' \in B(X, r-\gamma)} \sum_1^n w_i(x')I_{||x_i - x'|| > r} \big] \geq \frac{\alpha E[Y]}{3}.$$ By the conditions of Theorem \ref{thm_stone_cons}, the left side of the equation must tend to $0$ as $n \to \infty$. This implies that the same must hold for the right side. Therefore, $E[Y]$ tends to $0$ as $n \to \infty$, and we can select $N$ such that $E[Y] < \delta$ for $n \geq n$, which completes the proof. 
\end{proof}

\begin{proof} (\textbf{Theorem \ref{thm_stone_cons}})
Let $W$ be a weight function that satisfies the condition of Theorem \ref{thm_stone_cons}. Fix $\epsilon, \delta > 0$, and $\gamma < r/3$. Applying Lemma \ref{lem_balls}, let $B_1, B_2, \dots, B_s$ be an $(\epsilon, \gamma, \alpha)$-decomposition of $\D$. Let $T^+$ and $T^-$ be subsets of $\X$ corresponding to the definition of $r$-separation for $\D$.  

For $S \sim \D^n$, let $A$ denote the event that $$\sup_{\{x: d(x, \cup_1^s B_i) \leq r - 3\gamma\}} \sum_1^n w_i(x)I_{d(x_i, x) > r} < \frac{1}{3}.$$
Suppose $A$ holds. Pick a $B_i$. Since $T^+$ and $T^-$ have distance greater than $2r$, and $diam(B_i) \leq 2\gamma < r$, either $B_i \cap T^+ = \emptyset$ or $B_i \cap T^- = \emptyset$. Note that for $n$ sufficiently large, both cannot be empty since $P_\D(B_i) \geq \alpha > 0$ and each $x$ in the support of $\D$ is either in $T^+$ or $T^-$. 

Without loss of generality, $B_i \cap T^- = \emptyset$. Then $B_i \cap T^+ \neq \emptyset$. $B_i$ has diameter $2\gamma$. Thus $d(B_i, T^-) > 2r - 2\gamma$. Let $x \in B(B_i, r-3\gamma)$. Then if $(x_j, -) \in S$, by the triangle inequality, $d(x, x_j) > 2r -2\gamma - (r - 3\gamma) = r+\gamma$. 

Substituting this and using event $A$, we have that $$\sum_1^n w_i^S(x)I_{(x_i, -) \in S} \leq \sum_1^n w_i^S(x)I_{d(x_i, x) > r} < \frac{1}{3}.$$ It follows that $W_S(x) = +1$. An analogous argument holds for $B_i \cap T^+ = \emptyset$. This implies that $W_S$ is astute with radius $r-3\gamma$ over all $B_i$.

$\cup B_i$ has measure at least $1-\epsilon$. By Lemma \ref{lem_expectation}, for any $\delta>0$ event $A$ holds with probability $1-\delta$ for $n$ sufficiently large. Therefore, for $n$ sufficiently large, we see that $A_{r-3\gamma}(W_S, \D) \geq 1-\epsilon$ with probabiltiy $1-\delta$. Because $\epsilon, \delta$ and $\gamma$ were arbitrary, it follows that $W$ is \rcons,\emph{ }as desired.

\end{proof}

\begin{proof} (\textbf{Corollary \ref{nn_sep_thm}})
For any $S = \{(x_1, y_1), (x_2, y_2), \dots, (x_n, y_n)\} \subset \X \times \Y$, let $w_i^S(x)$ be $1$ if and only if $x_i$ is one of the $k_n$ nearest neighbors of $x$ in the set $S_\X = \{x_1, x_2, \dots x_n\}$. Let $\D$ be a distribution over $\X \times \Y$. By Theorem \ref{thm_stone_cons}, it suffices to show that for any $0 < a < b$, $$\lim_{n \to \infty} \mathbb{E}_{X \sim \D_\X}[\mathbb{E}_{S \sim \D^n} [\sup_{x' \in B(x,a)} \sum_1^n w_i^S(x')I_{d(x_i, x') > b}]] = 0.$$ Fix $0 < a < b$, and let $\epsilon > 0$. 

Pick $\gamma > 0$ such that $a+ 2\gamma < b$. This is possible for any $a < b$. Let $B_1, B_2, \dots, B_s$ be an $(\epsilon, \gamma, \alpha)$-decomposition of $\D$. By applying a Chernoff bound followed by a union bound, for any $\delta > 0$ there exists $n$ such that with probability $1-\delta$ over $S \sim \D^n$, each $B_i$ satisfies $|B_i \cap S_\X| \geq \frac{n\alpha}{2}$. Furthermore, if $n$ is sufficiently large, then $\frac{n\alpha}{2} > k_n$ holds as well. 

Consider any $x \in B_i$, and$x' \in B(x,a)$. $B_i$ has radius $\gamma$ and also satisfies $|B_i \cap S_\X| > k_n$. Therefore, there are at least $k_n$ points within distance $a+2\gamma$ of $x$. Because $a + 2\gamma < b$, it follows that none of the $k_n$ nearest neighbors of $x'$ can have distance more than $b$ from $x'$. In particular, $$\sum_1^n w_i^S(x')I_{d(x_i, x') > b} = 0.$$ Since $B_i$, $x$ and $x'$ were arbitrary, we have that for all $x \in \cup B_i$, $$\sup_{x' \in B(x,a)} \sum_1^n w_i^S(x')I_{d(x_i, x') > b} \leq  \begin{cases} 0 & |B_i \cap S_\X| \geq \frac{n\alpha}{2}, 1 \leq i \leq s \\ 1 & \text{otherwise} \end{cases}$$

Since $X \in \cup_1^s B_i$ with probability at least $1-\epsilon$, and since $|B_i \cap S_\X| \geq \frac{n\alpha}{2}, 1 \leq i \leq s$ with probability at least $1-\delta$, it follows that $$\mathbb{E}_{X \sim \D}[\mathbb{E}_{S \sim \D^n} [\sup_{x' \in B(x,a)} \sum_1^n w_i^S(x')I_{d(x_i, x') > b}]] \leq (1- \delta - \epsilon)0 + \delta + \epsilon = \delta + \epsilon,$$ which can be made arbitrarily small as $\epsilon$ and $\delta$ were arbitrary. Therefore, the limit as $n$ approaches infinity is $0$, as desired.
\end{proof}

\begin{proof} (\textbf{Corollary \ref{thm_kernel}})
Let $\D$ be a distribution over $\X \times \Y$. By Theorem \ref{thm_stone_cons}, it suffices to show that for any $0 < a < b$, $$\lim_{n \to \infty} \mathbb{E}_{X \sim \D}[\mathbb{E}_{S \sim \D^n} [\sup_{x' \in B(x,a)} \sum_1^n w_i^S(x')I_{d(x_i, x') > b}]] = 0.$$ Fix $0 < a < b$, and let $\epsilon > 0$. 

Pick $\gamma > 0$ be such that $a+ 2\gamma < b$. Let $B_1, B_2, \dots, B_s$ be an $(\epsilon, \gamma, \alpha)$-decomposition of $\D$. By applying a Chernoff bound, for any $\delta > 0$ there exists $n$ such that with probability $1-\delta$ over $S \sim \D^n$, each $B_i$ satisfies $|B_i \cap S_\X| \geq \frac{n\alpha}{2}$.

Next, consider any $x_i, x_j \in S_\X$, and let $x$ be a point such that $d(x_i, x) \leq a+2\gamma$ and $d(x_j, x) > b$. Then we have that 
\begin{equation*}
\begin{split}
\frac{w_j^S(x)}{w_i^S(x)} = \frac{K(\frac{d(x_j, x)}{h_n})}{K(\frac{d(x_i, x)}{h_n})}.
\end{split}
\end{equation*}
Because $b > a + 2\gamma$, $\frac{d(x_j, x)}{d(x_i, x)} > 1$. Therefore, since $\lim_{n \to \infty} h_n = 0$ and $\lim_{x \to \infty} \frac{K(cx)}{K(x)} = 0$ for $c > 1$, it follows that for any $\beta > 0$, there exists $N$ such that for $n \geq N$, $$\frac{w_j^S(x)}{w_i^S(x)} \leq \frac{\alpha\beta}{2}.$$ 

Fix any such $\beta$, and consider any $x$ with $d(x, B_i) \leq a$. Then $d(x, x') \leq a+ 2\gamma < b$ for any $x' \in B_i$. Recall that $B_i$ contains at least $\frac{n\alpha}{2}$ points, and let $c = \min_{i, d(x_i, x) \leq a + 2\gamma} w_i(x)$. Then it follows that 
\begin{equation*}
\begin{split}
\sum_1^n w_i^S(x)I_{d(x_i, x) > b} &\stackrel{(a)}{=} \frac{\sum_1^n w_i^S(x)I_{d(x_i, x) > b}}{\sum_1^n w_i^S(x)} \\
&\stackrel{(b)}{\leq} \frac{\sum_1^n w_i^S(x)I_{d(x_i, x) > b}}{\sum_1^n w_i^S(x)I_{d(x_i, x) \leq a+2\gamma}} \\
&\stackrel{(c)}{\leq} \frac{nc\frac{\alpha\beta}{2}}{\frac{n\alpha}{2}c} \\
&= \beta
\end{split}
\end{equation*} $(a)$ holds because the weights always sum to $1$. $(b)$ holds because we are reducing the denominator. $(c)$ holds because there are at least $\frac{n\alpha}{2}$ points in $B_i$, with $c$ being the minimum weight (stated above). The numerator is a result of the inequality shown above in which $w_j^S(x)/w_i^S(x) \leq \alpha\beta/2$ if $d(x_j, x) > b$ and $d(x_i, x) \leq a+2\gamma$.

Using this, we get the following bound: $$\sup_{x' \in B(X,a)} \sum_1^n w_i^S(x')I_{d(x_i, x') > b} \leq  \begin{cases} \beta & x \in \cup_1^s B_i, |B_i \cap S_\X| \geq \frac{n\alpha}{2}, 1 \leq i \leq s \\ 1 & \text{otherwise} \end{cases}$$ 

Since $x \in \cup_1^s B_i$ with probability $1-\epsilon$, and since $|B_i \cap S_\X| \geq \frac{n\alpha}{2}, 1 \leq i \leq s$ with probability $1-\delta$, it follows that $$\mathbb{E}_{X \sim \D}[\mathbb{E}_{S \sim \D^n} [\sup_{x' \in B(x,a)} \sum_1^n w_i^S(x')I_{d(x_i, x') > b}]] \leq (1- \delta - \epsilon)\beta + \delta + \epsilon.$$ which can be made arbitrarily small as $\epsilon, \beta,$ and $\delta$ were arbitrary. Therefore, the limit as $n$ approaches infinity is $0$, as desired. 
\end{proof}

\section{Proofs for general distributions}

\begin{lem}\label{chernoff_max_lem}
Let $B_1, \dots, B_s$ be a $(\epsilon, \alpha, \gamma)$ decomposition of $\D$ over $\X \times \Y$. Let $U \subseteq [s]$. Then if $n \geq O(\frac{s2^{2s}\log(1/\delta)}{\epsilon^2})$, then with probability at least $1-\delta$, for all $U$ we have: $$|\P_{(x,y) \sim \D}[x \in \cup_{i \in U} B_i, y = +] - \P_{(x,y) \sim \D_S}[x \in \cup_{i \in U} B_i, y = +]| \leq \epsilon,$$$$|\P_{(x,y) \sim \D}[x \in \cup_{i \in U} B_i, y = -] - \P_{(x,y) \sim \D_S}[x \in \cup_{i \in U} B_i, y = -]| \leq \epsilon.$$ 
\end{lem}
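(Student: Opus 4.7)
The plan is to reduce the uniform bound over all $2^s$ choices of $U$ to a union bound over a small finite family of atomic Venn-diagram events, and then use Hoeffding's inequality on each. First, I would partition $\cup_{i=1}^s B_i$ into its (at most) $2^s$ pairwise disjoint atoms
\[
A_T \;=\; \bigcap_{i \in T} B_i \;\setminus\; \bigcup_{i \notin T} B_i, \qquad T \subseteq [s], \; T \ne \emptyset,
\]
and note that for any $U \subseteq [s]$, the set $\cup_{i \in U} B_i$ is exactly the disjoint union of those $A_T$ with $T \cap U \ne \emptyset$. Consequently, for either label $y_0 \in \{+,-\}$,
\[
\P_\D[x \in \cup_{i \in U} B_i,\, y = y_0] \;=\; \sum_{T: T \cap U \ne \emptyset} \P_\D[x \in A_T,\, y = y_0],
\]
and the analogous identity holds for the empirical measure $\P_{\D_S}$.

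Second, for each of the $2 \cdot 2^s = 2^{s+1}$ atomic events $E_{T,y_0} = \{x \in A_T,\, y = y_0\}$, the empirical probability $\P_{\D_S}[E_{T,y_0}]$ is an average of $n$ i.i.d.\ $\{0,1\}$ indicators, so Hoeffding's inequality gives
\[
\P\bigl[\,|\P_{\D_S}[E_{T,y_0}] - \P_\D[E_{T,y_0}]| > \epsilon/2^s\,\bigr] \;\le\; 2 \exp\!\bigl(-2n (\epsilon/2^s)^2\bigr).
\]
A union bound over all $2^{s+1}$ atomic events shows that, provided $n = \Omega(2^{2s}(s+\log(1/\delta))/\epsilon^2) = O(s\,2^{2s}\log(1/\delta)/\epsilon^2)$, with probability at least $1-\delta$ every atomic probability is estimated to within $\epsilon/2^s$ simultaneously.

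Third, on the above high-probability event, for any $U \subseteq [s]$ and any label $y_0$ the triangle inequality gives
\[
\bigl|\P_\D[x \in \cup_{i \in U} B_i,\, y = y_0] - \P_{\D_S}[x \in \cup_{i \in U} B_i,\, y = y_0]\bigr| \;\le\; \sum_{T: T \cap U \ne \emptyset} \frac{\epsilon}{2^s} \;\le\; 2^s \cdot \frac{\epsilon}{2^s} \;=\; \epsilon,
\]
which is the desired conclusion for both labels simultaneously. There is no real technical obstacle here; the only decision is how finely to resolve each atomic event, and choosing accuracy $\epsilon/2^s$ per atom (so that summing over up to $2^s$ atoms yields total error $\epsilon$) is precisely what produces the $s\,2^{2s}$ sample-complexity factor stated in the lemma.
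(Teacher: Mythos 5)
Your proof is correct, but it takes a genuinely different route from the paper's. You resolve $\cup_{i\in U} B_i$ into the $2^s$ disjoint Venn atoms $A_T$, estimate each atomic probability to accuracy $\epsilon/2^s$ via Hoeffding, and recover the bound for every $U$ by summing; this is valid, and it is the argument that actually \emph{uses} the $s\,2^{2s}$ factor in the stated sample complexity. The paper instead observes that for each \emph{fixed} $U$ and label, the event $\{x \in \cup_{i\in U}B_i,\ y=y_0\}$ is already a single fixed measurable event, so its empirical frequency is directly an average of $n$ i.i.d.\ indicators; a Chernoff/Hoeffding bound at accuracy $\epsilon$ (not $\epsilon/2^s$) with failure probability $\delta/2^{s+1}$ per event, followed by a union bound over the $2^{s+1}$ pairs $(U,y_0)$, gives the result. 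The paper's route is shorter and in fact shows the lemma holds already for $n = O((s+\log(1/\delta))/\epsilon^2)$ --- the $2^{2s}$ in the statement is a harmless over-requirement --- whereas your atomization genuinely needs the larger $n$ because you demand per-atom accuracy $\epsilon/2^s$. The only thing your approach buys is an a posteriori ``explanation'' of the stated sample-complexity expression; the direct approach buys an exponentially better dependence on $s$.
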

\begin{proof}
For any given $U \subseteq [s]$, by a Chernoff bound we have that $$|\P_{(x,y) \sim \D}[x \in \cup_{i \in U} B_i, y = +] - \P_{(x,y) \sim \D_S}[x \in \cup_{i \in U} B_i, y = +]| > \epsilon$$ with probability at most $\frac{\delta}{2^{s+1}}$. Taking a union bound over all $U$, we see that with probability $1-\frac{\delta}{2}$, $$|\P_{(x,y) \sim \D}[x \in \cup_{i \in U} B_i, y = +] - \P_{(x,y) \sim \D_S}[x \in \cup_{i \in U} B_i, y = +]| \leq \epsilon$$ for all $U \subseteq [m]$. Applying the same to $y = -1$ and taking a union bound implies the result.
\end{proof}

\begin{lem}\label{gen_thm}
Let $M$ be a classification algorithm over $\X \times \Y$, $r >0$ be a radius, and $\D$ be a distribution over $\X \times \Y$. Then for any $\epsilon, \delta$ over $(0,1)$, and for all $\gamma$ over $(0, r/2)$, there exists $N$ such that for $n \geq N$, with probability $1-\delta$ over $S \sim \D^n$, $$A_{r - \gamma}(M_S, \D) \geq A_r(M_S, \D_S) - \epsilon,$$ where $\D_S$ denotes the uniform distribution over $S$.  
\end{lem}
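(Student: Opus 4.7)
The plan is to reduce the two astuteness quantities to a ball-by-ball comparison via a sufficiently fine cover of $\X$. First, I would apply Lemma~\ref{lem_balls} to obtain an $(\epsilon',\gamma/2,\alpha)$-decomposition $B_1,\dots,B_s$ of $\D$ for a suitably small $\epsilon'$ proportional to $\epsilon$; the essential feature is that each $B_j$ has diameter at most $\gamma$, so any two points of the same ball are within distance $\gamma$ of each other.

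The heart of the argument is the following geometric \emph{transfer} statement. If some sample point $(x_i,y_i)\in S$ has $x_i\in B_j$ and $M_S$ is astute at $(x_i,y_i)$ with radius $r$, then for every $x\in B_j$ whose true label equals $y_i$, $M_S$ is astute at $(x,y_i)$ with radius $r-\gamma$. Indeed, any $x''\in B(x,r-\gamma)$ satisfies $d(x'',x_i)\le(r-\gamma)+\gamma=r$ by the triangle inequality, so $x''\in B(x_i,r)$, on which $M_S$ is constantly $y_i$ by hypothesis.

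Let $U\subseteq[s]\times\{\pm 1\}$ collect those pairs $(j,y)$ for which some astute sample point with label $y$ lies in $B_j$. The transfer statement gives
\[
A_{r-\gamma}(M_S,\D)\ \ge\ \sum_{(j,y)\in U}\P_\D[x\in B_j,\,y'=y],
\]
while on the empirical side, astute sample points in $B_j$ with label $y$ contribute at most $\P_{\D_S}[x\in B_j,\,y'=y]$ and points outside $\bigcup_j B_j$ contribute at most $\P_{\D_S}[x\notin\bigcup_j B_j]$, yielding
\[
A_r(M_S,\D_S)\ \le\ \sum_{(j,y)\in U}\P_{\D_S}[x\in B_j,\,y'=y]+\P_{\D_S}\bigl[x\notin \bigcup_j B_j\bigr].
\]
Finally, I would invoke Lemma~\ref{chernoff_max_lem} with accuracy $\epsilon'$ and failure probability $\delta$ to get, with probability $1-\delta$, that empirical and true masses of unions $\bigcup_{j\in U'}B_j$ agree up to $\epsilon'$ uniformly over all $U'\subseteq[s]$ and both labels. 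Subtracting the two displays and bounding the resulting slack (the Chernoff error on $U$, together with the $\D$- and $\D_S$-mass outside $\bigcup_j B_j$) by a constant multiple of $\epsilon'$ yields the claim, once $\epsilon'$ is chosen as a small enough fraction of $\epsilon$.

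The main obstacle is the geometric transfer step, which promotes a single astute sample point in $B_j$ to astuteness over the entire $(B_j,y_i)$-mass of the true distribution at the reduced radius $r-\gamma$; this is what makes empirical astuteness actually relevant to true astuteness. The only other subtlety is that the set $U$ is random (depending on both $S$ and $M_S$), so one must invoke Lemma~\ref{chernoff_max_lem} in its \emph{uniform} form over all $U'\subseteq[s]$ rather than for a fixed $U$ chosen in advance.
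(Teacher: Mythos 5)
Your approach is essentially the paper's: cover most of the $\D$-mass by small balls from Lemma~\ref{lem_balls}, transfer astuteness of a sample point at radius $r$ to astuteness at a slightly smaller radius over its entire ball, and match empirical to true ball masses uniformly via Lemma~\ref{chernoff_max_lem}. The geometric transfer step and the observation that the labelled index set $U$ is random (forcing the uniform version of the Chernoff lemma) are exactly the two ingredients the paper uses.

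There is one step that fails as written: the lower bound
$A_{r-\gamma}(M_S,\D)\ \ge\ \sum_{(j,y)\in U}\P_\D[x\in B_j,\ y'=y]$
overcounts, because the balls produced by Lemma~\ref{lem_balls} come from a finite subcover and are not disjoint. If $x$ lies in $B_j\cap B_k$ with $(j,+),(k,+)\in U$, the right-hand side counts its mass twice, so the sum can exceed the probability of the union and hence exceed $A_{r-\gamma}(M_S,\D)$. The fix is local and is what the paper does: group by label and work with the unions $B^{+}=\bigcup_{(j,+)\in U}B_j$ and $B^{-}=\bigcup_{(j,-)\in U}B_j$, writing the lower bound as $\P_\D[x\in B^{+},y'=+]+\P_\D[x\in B^{-},y'=-]$ (these two events are disjoint because the labels differ, and no ball carries both labels since its diameter is less than $r$). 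This union form is also exactly the form in which Lemma~\ref{chernoff_max_lem} is stated, so the concentration step goes through unchanged; the empirical upper bound should be rewritten with the same unions so the two displays subtract cleanly. With that correction your argument is complete and matches the paper's.
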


\begin{proof} (\textbf{Lemma \ref{gen_thm}})
Fix $\epsilon, \delta > 0$ and $\gamma < r/2$. Applying Lemma \ref{lem_balls}, let $B_1, \dots, B_s$ be a $(\epsilon, \alpha, \gamma)$ decomposition of $\D$. 

Let $T$ be the subset of $S$ such that $M_S$ is astute at $T$ with radius $r$. Define: $$I_T^+ = \{i| (x_j, +) \in T, x_j \in B_i\}$$$$I_T^- = \{i| (x_j, -) \in T, x_j \in B_i\}.$$

Observe that $I_T^+ \cap I_T^- = \emptyset$. To see this, notice that $B_i$ has radius $\gamma < r/2$. This implies that any $(x_j, +), (x_k, -) \in B_i$ would force $M_S$ to not be astute at either of those points. Thus we an think of $I_T^+$ being the set of positively labeled balls, and $I_T^{-}$ being the set of negatively labeled balls.

Let $B^+ = \cup_{i \in I_T^+} B_i$ and $B^- = \cup_{i \in I_T^-} B_i$. Our strategy will be to argue that $M_S$ must be robust with radius $r-2\gamma$ at $B^+ \cup B^-$, and then to observe that $\P_\D[(B^+, +)] + \P_\D[(B^-, -)]$ must be close to $A_r(M_S, \D_S)$. 

Let $T_\X \subset \X$ denote the set of all $x_i$ such that $(x_i, y_i) \in T$. By the definitions of $\D_S$ and $T$, we have that 
\begin{equation*}
\begin{split}
A_r(M_S, \D_S) &= \frac{|T|}{n} \\
&= \frac{|T_\X \cap B^+|}{n} + \frac{|T_\X \cap B^-|}{n} + \frac{|T_\X \setminus (B^+ \cup B^-)|}{n}.
\end{split}
\end{equation*}

If $x_i \in \cup_1^s B_j$ and $x_i \in T_\X$, then by definition, $x \in (B^+ \cup B^-)$. Therefore, $T_\X \setminus (B^+ \cup B^-)$ consists of $x_i \notin \cup_1^s B_j$. Using this, we see that 
\begin{equation*}
\begin{split}
A_r(M_S, \D_S) &= \frac{|T_\X \cap B^+|}{n} + \frac{|T_\X \cap B^-|}{n} + \frac{|T_\X \setminus (B^+ \cup B^-)|}{n} \\
&\leq \P_{(x,y) \sim \D_S}[x \in B^+, y=+]+\P_{(x,y) \sim \D_S}[x \in B^-, y=-] + \P_{(x,y) \sim \D_S}[x \notin \cup_1^s B_j].
\end{split}
\end{equation*}

If $n$ is sufficiently large, then by Lemma \ref{chernoff_max_lem}, each term on the right is within $\epsilon$ of its corresponding probability over $\D$. Thus we see that with probability $1-\delta$, 
\begin{equation}\label{astute_bound_eqn}
A_r(M_S, \D_S) \leq \P_{(x,y) \sim \D}[x \in \cup_{i \in I_T^+}B_i, y=+]+\P_{(x,y) \sim \D}[x \in \cup_{i \in I_T^-}, y=-] + 4\epsilon.
\end{equation} 

Observe that if $M_S$ is robust with radius $r$ at $x_j \in B_i$, then it is robust with radius $r-2\gamma$ at all $x \in B_i$. Furthermore, for $x_j \in \cup_{i \in I_T^+}B_i$, $M_S$ is astute at $(x_j, +1)$ with radius $r$. Therefore $M_S(x) = +1$ for all $x \in \cup_{i \in I_T^+}B_i$. Consequently, 
\begin{equation*}
\begin{split}
A_{r-2\gamma}(M_S, \D) &\geq \P_{(x,y) \sim \D}[x \in \cup_{i \in I_T^+}B_i, y=+]+\P_{(x,y) \sim \D}[x \in \cup_{i \in I_T^-}B_i, y=-] \\
&\geq A_r(M_S, \D_S) - 4\epsilon \text{ }(\text{by equation }\ref{astute_bound_eqn}).
\end{split}
\end{equation*}
Since this equation holds with probability $1 - \delta$, and since $\epsilon$ and $\gamma$ were arbitrary, the result follows. 
\end{proof}

\begin{proof}(\textbf{Theorem \ref{thm_weight_general}})
For convenience, we let $W'$ represent the weight function described by $\ga(S,W, r)$. In particular, $W'_S$ and $W_{S_r}$ are the same classifier, where $S_r$ denotes the largest $r$-separated subset of $S$.

Fix $\epsilon, \delta >0$, and let $0 < \gamma < r$. For convenience, let $$Z_ i = \sup_{x \in B(x_i, r-\gamma)} \sum_{j=1}^m w_j^{S_r}(x)I_{||x_j - x|| > r}.$$ Because $W$ fulfills the conditions of Theorem \ref{thm_weight_general}, there exists $N$ such that for $n > N$, with probability $1-\delta$ over $S \sim \D^n$, $ \frac{1}{m} \sum_{i = 1}^m Z_i < \epsilon.$ Therefore, there exist at most $3m\epsilon$ values of $i$ for which $Z_i > \frac{1}{3}$. 

Since $S_r$ is $r$-separated, it follows that $$\sup_{x \in B(x_i, r-\gamma)} \sum_1^m w_j^{S_r}(x)I_{y_j \neq y_i} \leq Z_i.$$ Consequently, if $Z_i \leq \frac{1}{3}$, then $W_{S_r}(x) = y_i$ for all $x \in B(x_i, r-\gamma)$. Let $\D_S$ denote the uniform distribution over $S$. Then we have that $$A_{r-\gamma}(W'_S, \D_S) = A_{r-\gamma}(W_{S_r}, \D_S) \geq \frac{|S_r|}{n} - 3\epsilon.$$ 
Observe that for $n$ sufficiently large, with probability $1-\delta$, $|A_r(\b_r, \D) - A_r(\b_r, \D_S)| \leq \epsilon$. The maximum possible astuteness over $\D_S$ is $\frac{|S_r|}{n}$ since no classifier can be astute at 2 oppositely labeled points with distance at most $2r$. Therefore, with probability $1-2\delta$, $$A_{r - \gamma}(W'_S, \D_S) \geq A_r(\b_r, \D) - 4\epsilon.$$ By Lemma \ref{gen_thm}, for $n$ sufficiently large, with probability $1-\delta$ $$A_{r-2\gamma}(W'_S, \D) \geq A_{r-\gamma}(W'_S, \D_S) - \epsilon.$$ Therefore, for $n$ sufficiently large, with probability $1-3\delta$ over $S \sim \D$, $$A_{r - 2\gamma}(W'_S, \D) \geq A_r(\b_r, \D) - 5\epsilon.$$ Since $\epsilon, \delta,$ and $\gamma$ were arbitrary, we are done. 
\end{proof}

The following two quick lemmas are used for the proofs of Corollaries \ref{thm_NN_gen} and \ref{thm_kern_gen}.

\begin{lem}\label{lem_point_count}
Let $B_1, B_2, \dots, B_s \subset \X$ denote $s$ balls. Let $T \subset \X$ satisfy $|T \cap \cup_1^s B_i| = m$. Let $$I_k \subseteq [s] = \{i: |B_i \cap T| \geq k\}.$$ Then $|\cup_{i \in I_k} B_i \cap T| \geq m - ks$. 
\end{lem}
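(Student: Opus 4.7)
The plan is a short combinatorial counting argument, no probability or analysis required. The key observation is that every element of $T \cap \bigcup_{i=1}^{s} B_{i}$ either lies in some ``heavy'' ball (index in $I_{k}$) or in some ``light'' ball (index in $[s] \setminus I_{k}$, where the ball has fewer than $k$ elements of $T$). So I would bound the contribution of the light balls and subtract.

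Concretely, I would first write the decomposition
\[
T \cap \bigcup_{i=1}^{s} B_{i} \;=\; \Bigl(T \cap \bigcup_{i \in I_{k}} B_{i}\Bigr) \;\cup\; \Bigl(T \cap \bigcup_{i \notin I_{k}} B_{i}\Bigr),
\]
which gives $|T \cap \bigcup_{i \in I_{k}} B_{i}| \geq m - |T \cap \bigcup_{i \notin I_{k}} B_{i}|$ by inclusion-exclusion (really just subadditivity). Next I would bound the second term via a union bound: $|T \cap \bigcup_{i \notin I_{k}} B_{i}| \leq \sum_{i \notin I_{k}} |T \cap B_{i}|$. By definition of $I_{k}$, each summand satisfies $|T \cap B_{i}| < k$, and there are at most $s$ such indices, so the sum is strictly less than $ks$. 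Combining these two inequalities yields the claim.

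The argument has no real obstacle; the only thing to watch is getting the inequality $|T \cap B_{i}| < k$ (strict) versus $\leq k$ right, but either way the bound $ks$ (with or without strictness) suffices for the stated conclusion $|\bigcup_{i \in I_{k}} B_{i} \cap T| \geq m - ks$. No additional structure of $\mathcal{X}$ or of the balls is used, so the proof is purely set-theoretic.
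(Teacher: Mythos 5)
Your argument is correct and is essentially identical to the paper's proof: both bound $|T \cap \bigcup_{i \notin I_k} B_i|$ by $ks$ using the definition of $I_k$ together with a union bound over the at most $s$ light balls, and then subtract from $m$. No issues.
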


\begin{proof}
For any $j \notin I_k$, $|B_j \cap T| < k$. Since there are at most $s$ such $j$, it follows that $|\cup_{i \notin I_k} B_i \cap T| < ks$. Taking the complement implies the result. 
\end{proof}

\begin{lem}\label{lem_half}
Let $S$ be a finite subset of $\X \times \Y$. For any $r > 0$, let $S_r$ denote the largest $r$-separated subset of $S$. Then $|S_r| \geq \frac{|S|}{2}$. 
\end{lem}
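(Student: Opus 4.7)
The plan is to exploit the fact that the definition of $r$-separated only constrains pairs of points carrying \emph{different} labels: if $y_1 = y_2$, the condition $d(x_1, x_2) > 2r$ is not required at all. Hence any subset of $S$ in which only one label appears is automatically $r$-separated, regardless of the geometry of its points.

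First I would split $S$ by label, writing $S^+ = \{(x,y) \in S : y = +1\}$ and $S^- = \{(x,y) \in S : y = -1\}$, so that $|S^+| + |S^-| = |S|$. By pigeonhole, at least one of $|S^+|, |S^-|$ is $\geq |S|/2$; call the larger one $T$.

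Next I would verify that $T$ qualifies as an $r$-separated subset: for any two elements $(x_1, y_1), (x_2, y_2) \in T$ we have $y_1 = y_2$, so the implication ``$y_1 \neq y_2 \Rightarrow d(x_1, x_2) > 2r$'' is vacuously satisfied. Therefore $T$ is an $r$-separated subset of $S$, and since $S_r$ is \emph{the largest} such subset we conclude $|S_r| \geq |T| \geq |S|/2$.

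There is essentially no obstacle here; the proof is a one-line pigeonhole observation once one notices that the $r$-separated condition is trivial on monochromatic sets. The only thing to be careful about is not to overinterpret the definition and impose a distance condition on same-label pairs.
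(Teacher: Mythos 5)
Your proof is correct and is essentially identical to the paper's: both split $S$ into its positively and negatively labeled parts, observe that each is vacuously $r$-separated, and apply pigeonhole to conclude $|S_r| \geq |S|/2$. Nothing further to add.
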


\begin{proof}
Let $S = \{(x_1, y_1), (x_2, y_2), \dots (x_n, y_n)\}$. 
Define: $$S_+ = \{(x_i, y_i): y_i = +1\}$$ $$S_- = \{(x_i, y_i): y_i = -1\}.$$ Observe that $S_+$ and $S_-$ are both $r$-separated and have union $S$. Therefore one must have cardinality at least $\frac{|S|}{2}$, which implies the same about $|S_r|$.
\end{proof}

\begin{proof}(\textbf{Corollary  \ref{thm_NN_gen}}) 
For convenience, we let $W'$ represent the weight function described by $\ga(S,W, r)$. In particular, $W'_S$ and $W_{S_r}$ are the same classifier, where $S_r$ denotes the largest $r$-separated subset of $S$.

Relabel the points in $S$ so that $$S_r = \{(x_1, y_1), (x_2, y_2), \dots, (x_m, y_m)\},$$ with $ m \leq n$. We will also let $S_r^\X = \{x_1, x_2, \dots, x_m\}$. 

By Theorem \ref{thm_weight_general}, it suffices to show that for any $0 < a < b$, $$\lim_{n \to \infty}\mathbb{E}_{S \sim \D^n} [\frac{1}{m} \sum_{i=1}^{m}\sup_{x \in B(x_i,a)} \sum_{j=1}^{m} w_j^{S_r}(x)I_{d(x_i, x) > b}] = 0,$$ where $w_j$ denote the weight functions corresponding to $W$. Fix $0 < a < b$, and let $\epsilon > 0$. 

Pick $\gamma > 0$ be such that $a+ 2\gamma < b$. Let $B_1, B_2, \dots, B_s$ be a $(\epsilon, \gamma, \alpha)$ decomposition of $\D$. By applying a Chernoff bound, for any $\delta > 0$ there exists $n_0$ such that for $n \geq n_0$, with probability $1-\delta$ over $S \sim \D^n$, $$|S_\X \cap \cup_1^s B_i| \geq (1-2\epsilon)n.$$  By Lemma \ref{lem_half}, $\frac{m}{n} \geq \frac{1}{2}$. It follows that $|S_r^\X \cap \cup_1^s B_i| \geq m(1-4\epsilon)$. 

Let $$J = \{i: |B_i \cap S_r^\X| \geq m\frac{\epsilon}{s}\}.$$ By Lemma \ref{lem_point_count} it follows that  $|S_r^\X \cap \cup_{i \in J} B_i| \geq m(1 - 4\epsilon) - m\epsilon = m(1 - 5\epsilon)$.

Next, observe that if $n$ is sufficiently large, then $$\frac{k_n}{m} \leq \frac{2k_n}{n} \leq \frac{\epsilon}{s}.$$ Therefore, $|B_i \cap S_r^X|_r \geq k_n$ for $i \in J$.

Fix any $B_j$ with $j \in J$, and consider $x$ with $d(x, B_j) \leq a$. Then $d(x, x') \leq a+ 2\gamma < b$ for any $x' \in B_j$. Therefore, since $|S_r^X \cap B_i| \geq k_n$, all $k_n$-nearest neighbors of $x$ have distance at most $b$ to $x$. This implies that $$\sum_1^m w_i^{S_r}(x)I_{d(x_i, x) > b} = 0.$$ 

For convenience, let $$f(x_i) =\sup_{x \in B(x_i, a)}\sum_{j=1}^m w_j^{S_r}(x)I_{d(x, x_j) > b}.$$ For $x_i \in \cup_{j \in J} B_j$, any $x \in B(x_i,a)$ trivially satisfies $d(x, B_i) \leq a$. Therefore, $f(x_i) = 0$ Since $|S_r^\X \cap \cup_{j \in J} B_j| \geq m(1-5\epsilon)$, and $f(x_i) \leq 1$ for all $1 \leq i \leq m$, we have that 
\begin{equation*}
\begin{split}
\frac{1}{m}\sum_1^m f(x_i) &= \frac{1}{m}(\sum_{x_i \in \cup_{i \in J} B_i} f(x_i) + \sum_{x_i \notin \cup_{i \in J}B_i} f(x_i))\\
&\leq  \frac{1}{m}(0 + 5\epsilon m (1)) \\
&= 5\epsilon.
\end{split}
\end{equation*}
Since all of our equations hold with probability $1-\delta$ over $S$ for sufficiently large $n$, this last one does as well. Since this entirely expression is always at most $1$ (regardless of $S$), and  since $\delta, \epsilon$ were arbitrary, we have that $$\lim_{n \to \infty} E_{S \sim \D^n}[\frac{1}{m}\sum_1^m f(x_i)] = 0,$$ which completes the proof.  
\end{proof}

\begin{proof}(\textbf{Corollary \ref{thm_kern_gen}})
For convenience, we let $W'$ represent the weight function described by $\ga(S,W, r)$. In particular, $W'_S$ and $W_{S_r}$ are the same classifier, where $S_r$ denotes the largest $r$-separated subset of $S$.

Relabel the points in $S$ so that $$S_r = \{(x_1, y_1), (x_2, y_2), \dots, (x_m, y_m)\},$$ with $ m \leq n$. We will also let $S_r^\X = \{x_1, x_2, \dots, x_m\}$. 

By Theorem \ref{thm_weight_general}, it suffices to show that for any $0 < a < b$, $$\lim_{n \to \infty}\mathbb{E}_{S \sim \D^n} [\frac{1}{m} \sum_{i=1}^{m}\sup_{x \in B(x_i,a)} \sum_{j=1}^{m} w_j^{S_r}(x)I_{d(x_i, x) > b}] = 0,$$ where $w_j$ are the weight functions corresponding to $W$. Fix $0 < a < b$, and let $\epsilon > 0$. 

Pick $\gamma > 0$ be such that $a+ 2\gamma < b$. Let $B_1, B_2, \dots, B_s$ be a $(\epsilon, \gamma, \alpha)$ decomposition of $\D$. By applying a Chernoff bound, for any $\delta > 0$ there exists $n_0$ such that for $n \geq n_0$, with probability $1-\delta$ over $S \sim \D^n$, $$|S_\X \cap \cup_1^s B_i| \geq (1-2\epsilon)n.$$  By Lemma \ref{lem_half}, $\frac{m}{n} \geq \frac{1}{2}$. It follows that $|S_r^\X \cap \cup_1^s B_i| \geq m(1-4\epsilon)$. 

Let $$J = \{i: |B_i \cap S_r^\X| \geq \frac{m\epsilon}{s}\}.$$ By Lemma \ref{lem_point_count}, $|S_r^\X \cap \cup_{i \in J} B_i| \geq m(1 - 4\epsilon) - m\epsilon = m(1 - 5\epsilon)$.

Next, consider any $x_i, x_j \in S_r^\X$, and let $x$ be a point such that $d(x_i, x) \leq a+2\gamma$ and $d(x_j, x) > b$. Recall that $W$ is constructed from kernel function $K$ and window parameter $h_n$. We then have that 
\begin{equation}\label{eqn_kern_ratio}
\begin{split}
\frac{w_j^S(x)}{w_i^S(x)} = \frac{K(\frac{d(x_j, x)}{h_n})}{K(\frac{d(x_i, x)}{h_n})}.
\end{split}
\end{equation}
Because $b > a + 2\gamma$, $\frac{d(x_j, x)}{d(x_i, x)} > 1$. Fix any $\beta > 0$. Because $\lim_{n \to \infty} h_n = 0$ and $\lim_{x \to \infty} \frac{K(cx)}{K(x)} = 0$ for $c > 1$, there exists $N$ such that for $n \geq N$, $$\frac{w_j^S(x)}{w_i^S(x)} \leq \frac{\beta\epsilon}{s}.$$ 

Fix $B_j$ with $j \in J$, and consider $x$ with $d(x, B_j) \leq a$. By the triangle inequality, $d(x, x') \leq a+ 2\gamma$ for all $x' \in B_j$. Then we have the following,
\begin{equation}\label{eqn_weight_bound}
\begin{split}
\sum_1^m w_i^{S_r}(x)I_{d(x_i, x) > b} & \stackrel{(a)}{=} \frac{\sum_1^m w_i^{S_r}(x)I_{d(x_i, x) > b}}{\sum_1^m w_i^{S_r}(x)} \\
&\stackrel{(b)}{\leq} \frac{\sum_1^m w_i^{S_r}(x)I_{d(x_i, x) > b}}{\sum_{x_i \in B_j} w_i^{S_r}(x)} \\
&\stackrel{(c)}{\leq} \frac{m\sup_{x_i: d(x_i, x)>b}w_i^{S_r}(x)}{m\epsilon/s \inf_{x_i \in B_j} w_i^{S_r}(x)} \\
&\stackrel{(d)}{\leq} \frac{\beta\epsilon/s}{\epsilon/s} = \beta.
\end{split}
\end{equation}

Equation $(a)$ holds because the total sum of weights is always 1, $(b)$ because all weights are nonnegative, $(c)$ because $|B_j \cap S_r^\X| \geq m\epsilon/s$, and $(d)$ because of equation \ref{eqn_kern_ratio}.

Let $$Z_i=\sup_{x \in B(x_i, a)}\sum_{j=1}^m w_j^{S_r}(x)I_{d(x, x_j) > b}.$$ For $x_i \in \cup_1^t B_j$, any $x \in B(x_i,a)$ trivially satisfies $d(x, B_i) \leq a$. By equation \ref{eqn_weight_bound}, it follows that $Z_i \leq \beta.$ Since $|\cup_{j \in J} B_j \cap S_r^\X| \geq m(1-5\epsilon)$ and $Z_i \leq 1$ for all $1 \leq i \leq m$, we have that 
\begin{equation*}
\begin{split}
\frac{1}{m}\sum_1^m Z_i &= \frac{1}{m}(\sum_{x_i \in \cup_{j \in J} B_j} Z_i+ \sum_{x_i \notin \cup_{j \in J} B_j} Z_i)\\
&\leq (1-5\epsilon)\beta + 5\epsilon.
\end{split}
\end{equation*}
Since all of our equations hold with probability $1-\delta$ over $S$ for sufficiently large $n$, this last one does as well. Since this entire expression is always at most $1$ (regardless of $S$), and  since $\delta, \epsilon, \beta$ were arbitrary, we have that $$\lim_{n \to \infty} E_{S \sim \D^n}[\frac{1}{m}\sum_1^m Z_i] = 0,$$ which completes the proof. 

\end{proof}

\section{Experimental Details}
\begin{figure}[ht]
\vskip 0.2in
\begin{center}
\subfloat[][Training Size = 20]{\includegraphics[width=.45\textwidth]{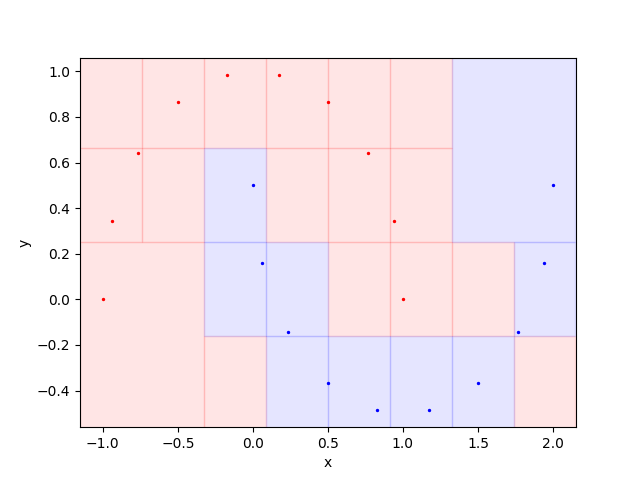}}\quad
   \subfloat[][Training Size = 50]{\includegraphics[width=.45\textwidth]{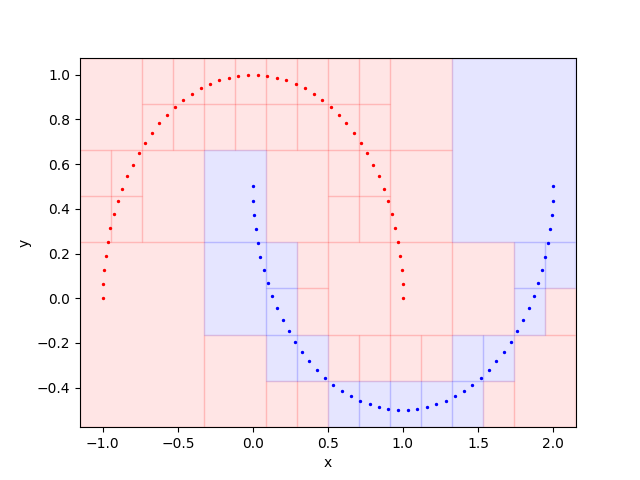}}\\
   \subfloat[][Training Size = 500]{\includegraphics[width=.45\textwidth]{visual500}}\quad
   \subfloat[][Training Size = 3000]{\includegraphics[width=.45\textwidth]{visual3000}}
\end{center}
\caption{A visualization of histograms learned with training data sampled from noiseless halfmoon. As training size grows, the histogram classifier becomes increasingly susceptible to adversarial examples in the blue regions.}
\label{fig:appendix_fig}
\vskip -0.2in
\end{figure}

\subsection{Optimal attacks against histogram classifiers}

Let $H$ be a histogram classifier, and let $(x,y)$ be any labeled example. Let $r > 0$ be some fixed robustness radius. Recall that an \textit{adversarial example} against $H$ at $(x,y)$ is any $x'$ such that $x' \in B(x,r)$ and $H(x') \neq y$. Note that if $H(x) \neq y$, then $x$ itself is an adversarial example. Conversely, if $H$ is astute at $(x,y)$ with radius $r$, then no adversarial example exists.

For arbitrary classifiers, finding adversarial examples at a given point can be challenging. However, recent work (Yang et. al. 2019) has shown that for non-parametric classifiers, there are tractable methods for doing so. The key insight is that non-parametric classifiers can be construed as a partitioning of input space into convex cells, with each cell having a given label. For example, Figure \ref{fig:appendix_fig} gives a visualization for these cells in a histogram classifier. 

Because these cells are convex, finding an adversarial example for $H$ at $(x,y)$ (here $x$ is a point in $\R^2$, and $y$ is a label) amounts to finding the closest cell $c \in H$ to $x$ such that $H(c) \neq y$. While Yang et. al. (Yang et. al. 2019) presents convex programming algorithms for doing this, the case of histograms in the $\ell_\infty$ metric is much simpler. 

As stated in definition 10, a histogram partitions the input space into hypercubes by iteratively splitting each cube into $2^d$ cubes with half the length. Therefore, the cells of a histogram are all hypercubes of varying sizes. For cell $c$, let $s(c)$ denote the length of the cube that $c$ corresponds to, and let $H(c)$ denote the label $H$ assigns to $c$. The key observation is that $c$ contains an adversarial example for $(x,y)$ if and only if $d(c, x) \leq s(c)/2 + r$, and $H(c) \neq y$. This yields the following algorithm:

Algorithm \ref{alg_hist_attack} was further optimized by utilizing nearest-neighbor type algorithms to find the ``closest" cells to $x$. This was done by grouping cells by their radii, and utilizing a separate nearest-neighbor data structure for all cells of a given radius. 

Although this algorithm doesn't have the same performance metrics as those presented in (Yang et. al. 2019), it was easily sufficient for computing the empirical astuteness for our experiments.

\begin{algorithm}[tb]
   \caption{Optimal attack algorithm for Histogram Classifiers}
   \label{alg_hist_attack}
\begin{algorithmic}
   \STATE {\bfseries Input:} Histogram $H$, labeled point $(x,y) \in \R^2 \times \{\pm 1\}$, robustness radius $r$
   \FOR{cell $c \in H$}
   \IF{$d(c,x) \leq s(c)/2 + r$ and $H(c) \neq y$}
   \STATE{Return $c$}
   \ENDIF
   \ENDFOR
\end{algorithmic}
\end{algorithm}

\end{document}